\documentclass[lettersize,journal]{IEEEtran}

\usepackage{amsmath,amsfonts}
\usepackage{algorithm}
\usepackage{algorithmic}
\usepackage{array}
\usepackage{dsfont}
\usepackage{textcomp}
\usepackage{stfloats}
\usepackage{url}
\usepackage{verbatim}
\usepackage{graphicx}
\usepackage{cite}
\usepackage{xcolor}
\usepackage{gensymb}
\usepackage{subcaption}
\usepackage{float}
\usepackage{enumitem}
\usepackage{booktabs}
\usepackage{multirow}
\usepackage{comment}
\usepackage{framed}
\usepackage{amssymb}
\usepackage{amsthm}
\usepackage{geometry}
\usepackage{amsthm}
\usepackage{mleftright}

\title{Hybrid Feedback–Guided Optimal Learning for Wireless
Interactive Panoramic Scene Delivery}

\newtheorem{remark}{Remark}

\newcommand{\bE}{\mathds{E}}

\newcommand{\ol}{\overline}

\newcommand{\id}{\mathds{1}}
\newcommand{\Beta}{\operatorname{Beta}}

\def\argmax{\operatornamewithlimits{arg\,max}}

\newcommand{\KL}[2]{D_{\mathrm{KL}}\bigl(#1 \,\|\, #2\bigr)}
\newcommand{\prob}{\mathop{\mathrm{Pr}}}  
\newcommand{\probprime}{\mathop{\mathrm{Pr}'}}

\author{Xiaoyi Wu,~\IEEEmembership{Graduate Student Member,~IEEE}, Juaren Steiger,~\IEEEmembership{Member,~IEEE}, Bin Li,~\IEEEmembership{Senior Member,~IEEE} 
and R. Srikant,~\IEEEmembership{Fellow,~IEEE}
}

\begin{document}

\maketitle
\newtheorem{theorem}{Theorem}
\newtheorem{lemma}{Lemma}
\newtheorem{claim}{Claim}
\newtheorem{proposition}{Proposition}
\newtheorem{corollary}{Corollary}
\newtheorem{definition}{Definition}
\newtheorem{assumption}{Assumption}
\newtheorem{remarks}{Remarks}
\bstctlcite{IEEEexample:BSTcontrol}
\begin{abstract}
Immersive applications such as virtual and augmented reality impose stringent requirements on frame rate, latency, and synchronization between physical and virtual environments. To meet these requirements, an edge server must render panoramic content, predict user head motion, and transmit a portion of the scene that is large enough to cover the user viewport while remaining within wireless bandwidth constraints. Each portion produces two feedback signals: prediction feedback, indicating whether the selected portion covers the actual viewport, and transmission feedback, indicating whether the corresponding packets are successfully delivered. Prior work models this problem as a multi-armed bandit with two-level bandit feedback, but fails to exploit the fact that prediction feedback can be retrospectively computed for all candidate portions once the user head pose is observed. As a result, prediction feedback constitutes full-information feedback rather than bandit feedback. Motivated by this observation, we introduce a two-level hybrid feedback model that combines full-information and bandit feedback, and formulate the portion selection problem as an online learning task under this setting. We derive an instance-dependent regret lower bound for the hybrid feedback model and propose AdaPort, a hybrid learning algorithm that leverages both feedback types to improve learning efficiency. We further establish an instance-dependent regret upper bound that matches the lower bound asymptotically, and demonstrate through measurements on an end-to-end testbed that AdaPort outperforms state-of-the-art learning-based baselines as well as the heuristic minimum scene delivery scheme.
\end{abstract}

\section{Introduction}
\label{sec:intro}  
The evolution of immersive technologies, including extended reality (XR), which includes both virtual reality (VR) and augmented reality (AR), holographic displays, and volumetric capture, is redefining digital interaction across multiple domains, such as gaming, telepresence, remote collaboration, and immersive training. These systems provide users with highly responsive and interactive virtual environments that continuously adapt to their movements and perspectives in real time. In practical deployments, panoramic visual content is commonly streamed wirelessly from an edge server to a head-mounted display (HMD), such as a VR headset, since strict weight and power constraints limit the onboard rendering capabilities of HMDs for high-fidelity 3D graphics. Unlike traditional video streaming, immersive panoramic experiences are particularly sensitive to latency and misalignment, which may result in user discomfort or motion sickness. To ensure a seamless experience, tight synchronization between the user’s real-world \textit{head pose} (i.e., position and orientation) and the corresponding virtual viewpoint is essential. Consequently, the edge server cannot rely solely on the latest pose information and must instead \textit{predict} the user’s current head pose. Based on this prediction, the server proactively transmits a region of the panoramic scene that is sufficiently large to fully cover the user’s \textit{viewport}, while compensating for inevitable head-motion prediction errors.  

A straightforward approach would be for the edge server to transmit a portion covering the full 360$\degree$ panoramic scene. However, such a strategy is impractical due to constraints imposed by the wireless channel and stringent real time streaming requirements. Compared to conventional video delivery, panoramic video streaming typically demands four to six times higher bandwidth \cite{bao2016}. As a result, transmitting the entire scene can easily cause severe congestion. Moreover, user interaction in immersive panoramic environments induces frequent and rapid head movements, which in turn lead to more pronounced channel dynamics than those experienced by stationary users. This increased variability further exacerbates packet loss.
Real time latency constraints also play a critical role. For instance, Meta recommends a target of 60 frames-per-second (FPS) for media applications, while interactive applications are required to sustain at least 72 FPS on Meta Quest headsets \cite{metadocs}. In addition, user studies conducted by Wang et al. \cite{wang2023framerate} indicate that frame rates around 120 FPS significantly reduce virtual reality sickness. Under such stringent timing requirements, the edge server often lacks sufficient opportunity to retransmit lost packets within a single frame interval. Consequently, the server must carefully select a transmission region that is large enough to reliably cover the user viewport, yet sufficiently small to be delivered successfully over the wireless channel within one frame duration.

If the dynamics of the user's head motion and the wireless channel were known in advance, the edge server could calculate the optimal delivery portion. However, in practice, these statistics are unknown and must be learned in real-time. A natural approach is to model the problem as a multi-armed bandit, where each arm is a delivery portion and a Bernoulli reward is accrued in a timeslot if the chosen delivery portion results in the user successfully seeing their entire viewport. In general, the rewards are non-i.i.d. due to the nonstationary user head motion and panoramic content. However, stochastic bandit algorithms, such as KL-UCB, are shown to empirically converge in real-world panoramic scene delivery experiments \cite{gupta2022online}. Moving beyond this basic formulation, Chen et al. \cite{chen2020thompson} observe that the overall reward naturally decomposes into two distinct feedback signals. The first corresponds to the prediction outcome, which captures whether the selected delivery portion indeed covers the user actual viewport. The second corresponds to the transmission outcome, which reflects whether all packets associated with the selected portion are successfully delivered over the wireless channel. In their framework, prediction and transmission outcomes are treated as separate bandit feedback observations, leading to what they term \textit{``two-level bandit feedback''}. Throughout this paper, we denote this setting as 2/B/B, while the conventional single-level bandit feedback is denoted as 1/B. \footnote{This $n/x/x$ notation, where $n$ indicates the number of different feedback signals, and the remaining $x$'s indicate the type of each feedback signal, is inspired by Kendall's notation from queueing theory.} In the presence of two underlying bandit feedback signals, the 1/B setting corresponds to the scenario in which only the product of the two feedback signals is observable.

However, existing works \cite{chen2020thompson,gupta2022online} overlook an important structural property of the problem. Once the edge server receives the user head pose, it can retrospectively determine whether the user viewport would have been covered by every possible delivery portion, rather than only the portion selected in the previous time slot. As a result, the prediction outcome is not a bandit signal but instead constitutes full-information feedback. The learning problem therefore exhibits two-level feedback in which one level provides full-information feedback and the other provides bandit feedback. We refer to this feedback structure as 2/F/B.
In this paper, we investigate how exploiting this richer feedback structure can improve learning efficiency for wireless interactive panoramic scene delivery. In particular, we demonstrate the advantages of leveraging 2/F/B feedback over conventional 2/B/B and 1/B feedback models. Our main contributions are summarized as follows:
\begin{enumerate}
    \item We formulate the problem of maximizing cumulative successful viewport delivery of an interactive panoramic scene as an online learning problem with two-level full-information and bandit (2/F/B) feedback. To our knowledge, no prior work studies a reward formed as the product of two component signals observed under different feedback types.
    \item We derive an instance-dependent lower bound on the regret under 2/F/B feedback (Theorem~\ref{theorem:lower_bound}), which is shown to be significantly smaller than the corresponding lower bounds for 2/B/B and 1/B feedback for some problem instances. This suggests that incorporating full-information feedback can significantly improve learning efficiency.
    \item We design the \textit{Hybrid Feedback-Driven Learning for Adaptive Portion Selection (AdaPort)} algorithm: a novel online learning algorithm that leverages 2/F/B feedback by using the empirical mean estimate for the full-information feedback and the Thompson sample for the bandit feedback.
    \item We derive an instance-dependent upper bound (Theorem~\ref{theorem:upper_bound}) for AdaPort that matches the lower bound asymptotically, thus showing that the algorithm is asymptotically optimal. We also illustrate numerically the gap between the 2/F/B lower bound constant and those under 2/B/B and 1/B feedback.
    \item 
    To validate the practical effectiveness of AdaPort, we evaluate it on data measured from an end-to-end panoramic scene delivery testbed, in which both feedback signals are produced by the same live wireless session. We compare AdaPort against state-of-the-art stochastic bandit portion selection algorithms that use 2/B/B and 1/B feedback, and also consider a sliding-window variant of AdaPort, which we call \textit{SW-AdaPort}, to examine the impact of temporal dependence and nonstationarity in the testbed environment. SW-AdaPort restricts its estimates to recent observations and therefore adapts more quickly when the underlying statistics change. Our experiments show that SW-AdaPort can outperform AdaPort when the window size is well chosen, while overly small or large windows can lead to worse performance than AdaPort. On this testbed AdaPort improves on both 2/B/B and 1/B feedback. Its margin over 1/B feedback is smaller, for a reason we identify in Section~\ref{section:empirical_evaluation}.
    
\end{enumerate}

This work extends our conference version \cite{wu2025optimal} in the following
aspects. (1) More detailed proofs for Theorem \ref{theorem:lower_bound} and Theorem \ref{theorem:upper_bound} are included, and deeper intuition regarding the theoretical results is provided. (2) The evaluation is rebuilt on an end-to-end panoramic scene delivery testbed, in which the prediction outcome and the transmission outcome are produced by the same live wireless session rather than composited from separately collected traces. This allows us to validate that our approach holds up in a real system, where the i.i.d. and independence assumptions used for theoretical analysis may be violated. (3) We now consider a sliding-window variant of AdaPort, called SW-AdaPort, in order to better target the nonstationary environment.

\emph{Note on Notation}: We use bold and script font of a variable to denote a vector and a set, respectively. We use $a\wedge b$ to denote the $\min\{a,b\}$. 
We use $[N] \triangleq \{1, 2, \ldots, N\}$ to denote the set of the first $N$ positive integers.
We use $f(x) = o(g(x))$ to denote that $\lim_{x \to \infty} f(x)/g(x) = 0$, and  
$f(x) = O(g(x))$ to denote that $\limsup_{x \to \infty} f(x)/g(x) < \infty$,  
for positive functions $f$ and $g$.
We write $f(x) = \Theta(g(x))$ if there exist constants $c_1, c_2 > 0$ and $x_0$ such that  
$c_1 g(x) \leq f(x) \leq c_2 g(x)$ for all $x \geq x_0$. $ F^{B}_{n,p}(\cdot) $ denotes the cumulative distribution function (CDF) of the binomial distribution with $n$ trials and success probability $p$. We use $\Beta(\alpha, \beta)$ to denote the Beta distribution with parameters $\alpha$ and $\beta$ and $F^{\text{Beta}}_{\alpha,\beta}(\cdot)$ to denote the CDF of this Beta distribution. $d(p_1,p_2) \triangleq p_1\log\mleft(\frac{p_1}{p_2}\mright) + {(1-p_1)\log\mleft(\frac{1-p_1}{1-p_2}\mright)}$ is the KL-divergence between the $\text{Bernoulli}(p_1)$ and $\text{Bernoulli}(p_2)$ distributions, with the conventions $0 \log 0 = 0$, so that $d(p,1) = \infty$ for $p<1$ and $d(p,0)=\infty$ for $p>0$, and $1/\infty = 0$. 
1/B refers to the standard (one-level) bandit feedback, 2/B/B refers to two-level bandit feedback, and 2/F/B refers to two-level feedback where one level is full-information, and the other level is bandit feedback.

\section{related work}
\label{section:related_work}

In this section, we review the related work on interactive panoramic scene delivery, and on online learning under full-information feedback and under bandit feedback. 

\subsection{Interactive Panoramic Scene Delivery}
Panoramic scene delivery places significantly greater demands on network bandwidth than traditional video streaming. To address these challenges and enhance user experience, numerous studies have explored strategies for optimizing panoramic content transmission (e.g., \cite{liu2020firefly, qian2018flare}). For example, Qian et al.~\cite{qian2018flare} introduced an adaptive scheme that transmits only the portion of the panoramic scene aligned with the predicted user viewport, effectively reducing bandwidth consumption.
However, these early approaches primarily relied on heuristic methods without providing theoretical performance guarantees. In response, subsequent research has introduced multi-armed bandit frameworks to the problem of panoramic scene delivery, aiming to provide a more principled and theoretically grounded understanding. Notably, recent studies have begun to explore how to effectively utilize the two distinct feedback signals inherent in this setting: motion prediction and wireless transmission. For example, Chen et al. \cite{chen2020thompson} proposed the Two-Level Thompson Sampling algorithm to optimize viewport selection and maximize system throughput, while Gupta et al. \cite{gupta2022online} developed a Two-Level KL-UCB approach for the same objective.
In parallel, other works have addressed multi-user scenarios (e.g., \cite{chakareski2020viewport, chen2021motion, wu2024}) and multi-objective learning  (e.g. \cite{juaren2025learning}) in panoramic scene delivery. Nevertheless, prior studies have generally overlooked a key observation: the prediction outcome provides a full-information feedback signal, while the wireless transmission outcome yields bandit feedback. Thus, the problem naturally exhibits a two-level hybrid feedback structure, combining both full-information and bandit feedback, which has not been explicitly modeled or exploited in existing work.

\subsection{Feedback Models in Online Learning}
In the online learning setting, bandit feedback refers to the scenario where the player observes the reward only for the arm selected at each timeslot, while the rewards of all other actions remain unobserved. In contrast, full-information feedback provides the player with the rewards of all available arms at each timeslot, regardless of the action selected. This richer feedback enables more informed learning strategies. In the context of bandit feedback, several classical algorithms have been extensively studied, including UCB \cite{auer2002finite}, KL-UCB \cite{garivier2011kl}, and Thompson Sampling \cite{agrawal2017near}. While most foundational work focuses on this bandit feedback setting, there also exist a number of studies exploring full-information feedback in both stochastic and non-stochastic bandit frameworks (e.g., \cite{zhao2019stochastic, alon2017nonstochastic, yan2023online, zhang2023online, liu2018information, cheng2023understandingrolefeedbackonline}).
For instance, Zhao et al. \cite{zhao2019stochastic} examined a one-sided full-information stochastic bandit problem, where selecting an arm yields a reward from an unknown distribution while also revealing feedback from all arms located on one side of the selected arm. Alon et al. \cite{alon2017nonstochastic} introduced a partial-information model for online learning that generalizes and interpolates between the classical full-information and bandit settings.

 A broader and extensively studied direction concerns online learning with structured feedback over the action space, which interpolates between the bandit and full-information extremes. In the \emph{feedback graph} model, playing an arm additionally reveals the rewards of its neighbors in a graph, recovering bandit feedback for an empty graph and full information for a complete graph \cite{alon2017nonstochastic, mannor2011bandits}. The closely related \emph{side-observation} model studies stochastic bandits in which selecting an arm yields side observations of related arms \cite{mannor2011bandits, caron2012leveraging}. More generally, \emph{partial monitoring} decouples the incurred loss from the observed signal through a feedback matrix, subsuming both bandit and full-information feedback as special cases \cite{bartok2014partial}. Our setting is also related to \emph{semi-bandit} feedback in combinatorial bandits, where the learner observes the outcomes of the individual components of the selected action \cite{audibert2014regret}. Our 2/F/B model can be cast as a partial-monitoring problem: the prediction outcome is observed for every candidate portion once the user's actual viewport becomes known, whereas the transmission outcome is observed only for the portion that was actually sent. What distinguishes our setting from the feedback graph and side-observation models is that the fully observed signal is a \emph{factor} of the reward rather than the reward itself, and that the outcome distribution has the factorized form of Section~\ref{section:system_model}.

 To the best of our knowledge, no prior work studies a reward formed as the product of two component signals observed under different feedback types, one full-information and one bandit. Our contribution is the explicit instance-dependent analysis of this factorized model: an asymptotically matching regret constant, together with bounded expected exploration for arms whose prediction rate is too low for them to be optimal. General partial-monitoring results place no structure on the outcome distribution and do not directly provide either.

\section{System Model}
\label{section:system_model}

We consider a system in which a single user interacts with an immersive panoramic scene that is delivered wirelessly from an edge server. The panoramic scene can be conceptualized as video content projected onto the inner surface of a virtual sphere surrounding the user's head. The user's viewport, i.e., the visible portion of the scene, which typically constitutes around 20\% of the sphere's surface, is a rectangle centered at a fixed origin point corresponding to the user's view direction. The content on the sphere's surface changes as the user moves through the scene and rotates their head, and also changes according to the dynamics of the virtual environment. Using a standard map projection technique, such as equirectangular projection, the spherical panoramic content is transformed into a two-dimensional finite grid of tiles \footnote{A tile represents the atomic unit for image encoding/decoding.}. A \textit{delivery portion} is a rectangular formation of tiles centered at and with dimensions larger than the user's viewport. We assume there are $N$ fixed delivery portions that the edge server can choose from in each timeslot. Note that each timeslot  $t=1,2,\ldots$ corresponds to a video frame and the system operates at 60-120 timeslots per second depending on the application's framerate requirement.


\subsection{Online Learning under 2/F/B Feedback}

After the edge server selects the delivery portion $i(t) \in [N]$ in timeslot $t$, the content corresponding to that delivery portion is sent to the user as a sequence of packets. Note that in general, the content may not be predetermined and may be updated in real-time according to the user's inputs (e.g., consider a VR video game). Therefore, we can only use a limited playback buffer because the content in timeslot $t$ may become immediately irrelevant in timeslot $t+1$. Therefore, the content occupying the user's viewport in timeslot $t$ must be sent from the server and received by the user by the end of timeslot $t$. Additionally, due to the short timeslot length, we do not consider any packet retransmissions. However, we assume a packet sent from the user to the edge server, called the \textit{ACK packet}, containing the transmission acknowledgment, head pose, and other inputs from the previous timeslot is small enough that it can be reasonably assumed to be reliably delivered.  
\subsubsection{Prediction Outcome as Full-Information Feedback}
\begin{figure}[htbp]
  \centering
  \includegraphics[width=0.38\textwidth, height=0.13\textwidth]{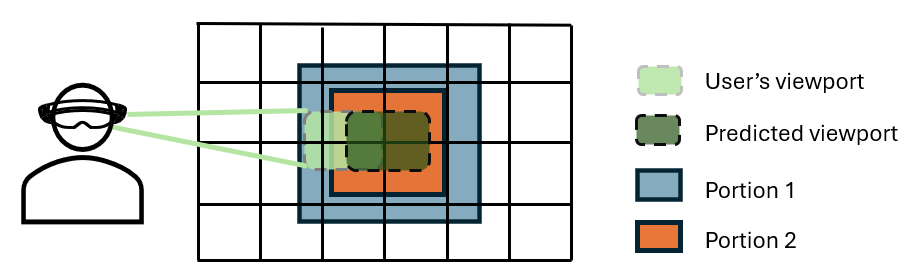}
  \caption{Relationship between the user's viewport, the predicted viewport, and delivery portions.}
  \label{fig:system_model}
\end{figure}

At the beginning of timeslot $t + 1$, the server will have received the user's ACK packet from timeslot $t$. Therefore, the server knows the user's predicted head pose, as well as their actual head pose from timeslot $t$, and can calculate whether or not each portion $i \in [N]$ would have covered their viewport. We use $X_i(t)=1$ to indicate that portion $i$ can cover the user's viewport in timeslot $t$ and $X_i(t)=0$ otherwise. 
As illustrated in Fig.\ref{fig:system_model}, given the user's actual viewport (light green), portion~1 fully covers it, i.e., $X_1(t) = 1$, whereas portion~2 does not, i.e., $X_2(t) = 0$.
Then the server observes the entire vector $\mathbf{X}(t) \triangleq (X_i(t))_{i=1}^N$ by the beginning of timeslot $t+1$, i.e. the server receives full-information feedback for the prediction outcome. We assume $\mathbf{X}(t)$ is i.i.d. over time 
and each component has unknown rate $\alpha_i \triangleq \bE[X_i(t)]$. 
However, note that $\mathbf{X}(t)$ is component-wise dependent, i.e. for a fixed timeslot $t$ and two portions $i\neq j$, $X_i(t)$ and $X_j(t)$ are not independent.
As previously mentioned, they are actually both functions of two random variables, namely, the edge server's prediction of the user's head pose from timeslot $t$ and the user's actual head pose from timeslot $t$.

\subsubsection{Transmission Outcome as Bandit Feedback}

At the beginning of timeslot $t+1$, after receiving the user's ACK packet from timeslot $t$, the server knows whether or not the transmission of delivery portion $i(t)$ was successful. We use $Y_i(t) = 1$ to indicate that the transmission of delivery portion $i$ would be successful if it were selected in timeslot $t$, and $Y_i(t) = 0$ otherwise. Then the server observes the transmission outcome $Y_{i(t)}(t)$ as bandit feedback. In this paper, we consider the transmission to have failed if any packet constituting the delivery portion $i(t)$ is lost. However, the problem can easily be extended to consider a failed transmission to occur only when a minimum portion of the packets are lost. 
As with the prediction outcome, we assume $\mathbf{Y}(t) \triangleq (Y_i(t))_{i=1}^N$ is i.i.d. over time, but unlike the prediction outcome,  we assume it is component-wise independent, with each component having an unknown rate $\beta_i \triangleq \bE[Y_i(t)]$.
Note that, as illustrated in Fig.\ref{fig:system_model}, all candidate portions encompass the predicted viewport (dark green), being centered on it and expanded outward to varying extents.
To summarize the stochastic assumptions: the pairs $(\mathbf{X}(t), \mathbf{Y}(t))$ are i.i.d.\ across timeslots, and in each timeslot $\mathbf{Y}(t)$ is independent of $\mathbf{X}(t)$ and has independent $\text{Bernoulli}(\beta_i)$ components. The distribution of $\mathbf{X}(t)$ is unknown and its components may be dependent. Any randomization internal to the selection algorithm is independent of $(\mathbf{X}(t), \mathbf{Y}(t))_{t \geq 1}$.

We use $Z_i(t)=1$ to indicate that the user would be able to successfully view the desired content if portion $i$ were selected in timeslot $t$, and $Z_i(t)=0$ otherwise. We also refer to $Z_i(t)$ as the \textit{``throughput"} or \textit{``reward"} for portion $i$ in timeslot $t$. Note that $Z_i(t)=1$ only when both the prediction and the transmission are successful, i.e. $Z_i(t)=X_i(t) Y_i(t)$. Our objective is to design an adaptive portion selection algorithm that maximizes the expected cumulative throughput $\sum_{t=1}^T \bE\mleft[ Z_{i(t)}(t) \mright]$ up to a time horizon $T$ when the prediction and transmission rates $(\alpha_i, \beta_i)_{i=1}^N$ are unknown. If the parameters were known, the optimal policy would be to always play the arm  $i^* \in \argmax_{i \in [N]} \alpha_i \beta_i$, which we assume to be unique. Therefore, the regret is given by 


\begin{align}
\label{system_model_regret}
R(T) & \triangleq 
\sum_{t=1}^T \bE\mleft[ Z_{i^*}(t) - Z_{i(t)}(t) \mright] \\
&= T \alpha_{i^*} \beta_{i^*}-\sum_{t=1}^T \mathbb{E}\left[ X_{i(t)}(t) Y_{i(t)}(t)\right] \nonumber  \\
& =\sum_{i \neq i^*}\Delta_i \mathbb{E}\left[n_i(T) \right] \nonumber,
\end{align}
where $\Delta_i \triangleq \alpha_{i^*}\beta_{i^*}-\alpha_{i}\beta_{i}$ denotes the reward gap for portion $i$ and $n_i(T) \triangleq \sum_{t=1}^T \id\{i(t) = i\}$ denotes the number of plays of portion $i$ up to timeslot $T$.

To summarize, adaptive portion selection for wireless interactive panoramic scene delivery under 2/F/B feedback proceeds as follows: In each timeslot $t$,
\begin{enumerate}[label=\textcircled{\arabic*}]
    \item The server decides the delivery portion $i(t)$ for timeslot $t$ according to its history of observations $(\mathbf{X}(\tau), Y_{i(\tau)}(\tau),  i(\tau))_{\tau=1}^{t-1}$. \label{step_one}
    \item The server predicts the user's head pose for timeslot $t$ and sends the content occupying the chosen delivery portion $i(t)$ centered on the user's predicted head pose  to the user over the wireless channel.
    \item The server receives the ACK packet for timeslot $t$ and observes the user's actual head pose and the transmission outcome $Y_{i(t)}(t)$. 
    \item The server calculates the prediction outcome $X_i(t)$ for each portion $i$ from the user's actual head pose (received in the ACK packet) and the server's predicted head pose. 
\end{enumerate}

\subsection{Comparison with 1/B and 2/B/B Feedback}

Using the language of the multi-armed bandit literature, we sometimes call a delivery portion $i \in [N]$ an \textit{``arm"}. For a bandit feedback signal, balancing \textit{exploration} and \textit{exploitation} is crucial. That is, exploring underplayed arms vs. exploiting the current most promising arm. The prior work on portion selection \cite{gupta2022online,chen2020thompson} improved on the 1/B feedback model, where the edge server only receives the throughput $Z_{i(t)}(t)$ as feedback, by considering the 2/B/B feedback model, where the edge server receives both $X_{i(t)}(t)$ and $Y_{i(t)}(t)$ as feedback. While 2/B/B gathers more information per timeslot, both feedback signals are bandit, and are therefore subject to the exploration/exploitation dilemma. The major improvement we introduce with the 2/F/B feedback model is to completely remove the necessity for exploration in one of the feedback signals. Also note that 2/F/B feedback yields $N-1$ more pieces of information compared to 2/B/B feedback.  Intuitively, this is because an estimate of the prediction rate improves in every timeslot whether or not portion $i$ is sent, so no plays have to be spent on it. Under 2/B/B both rates can be estimated only from the portions that were actually sent, so plays must be spent resolving the uncertainty in each of them.

\subsection{Discussion and Limitation}
In this subsection, we discuss the limitations of the assumptions in our system model. Specifically, we assume that $\mathbf{X}(t)$ and $\mathbf{Y}(t)$ are i.i.d.\ over time, and that  $\mathbf{X}(t)$ and $\mathbf{Y}(t)$ are independent. These assumptions, also adopted in \cite{chen2020thompson,gupta2022online}, facilitate analytical tractability. In practice, however, $\mathbf{X}(t)$ may exhibit temporal dependencies due to user head movements, violating the i.i.d.\ assumption, while $\mathbf{Y}(t)$ may deviate from i.i.d.\ behavior due to variations in the panoramic scene content. Furthermore, $\mathbf{X}(t)$ and $\mathbf{Y}(t)$ may be dependent because recent head poses can be related to the scene content.
 The temporal dependence of the transmission outcome is particularly well documented. A long line of measurement and modeling work shows that wireless packet loss is bursty and temporally correlated rather than i.i.d. The classical Gilbert-Elliott model captures this through a two-state Markov chain that alternates between a low-loss and a high-loss state \cite{gilbert1960capacity, elliott1963estimates}, and subsequent studies further model loss using higher-order Markov chains. This correlation is present in our own measurements: the transmission indicator $Y_i(t)$ is observed directly on a live wireless link, and we report its autocorrelation in Section~\ref{section:empirical_evaluation}. We therefore view the i.i.d.\ assumption on $\mathbf{Y}(t)$ as a tractable approximation adopted for the analysis rather than an accurate description of the physical channel.
Recall that we also assumed $\mathbf{Y}(t)$ is component-wise independent. However, in reality, if we successfully deliver the chosen portion, then we know that we could have also successfully delivered a smaller portion, and if we fail to deliver the chosen portion, then we know that we would have also failed to deliver a larger portion.
 Logarithmic regret has been established for particular Markovian bandit models under additional assumptions, with constants that depend on the mixing properties of the underlying chains \cite{anantharam1987markovian, tekin2012rested}. These results are for index policies designed for those models and do not directly establish regret guarantees for AdaPort under temporally dependent feedback. Our guarantees assume the i.i.d.\ model above, and robustness to temporal dependence is evaluated empirically. To accommodate such temporal correlation and potential non-stationarity in practice, a natural variant of AdaPort replaces the empirical-mean and Beta-posterior updates with their sliding-window counterparts, which discount stale observations. We examine this sliding-window variant empirically in Section~\ref{section:empirical_evaluation}. Despite these modeling assumptions, it is worth noting that our real-world trace-based evaluations, which \textit{do not} rely on any i.i.d. or independence assumption, show that the proposed algorithm consistently outperforms baseline methods. AdaPort even performs better than its sliding-window variant SW-AdaPort when the window size is poorly chosen. We note that choosing the window size typically requires more knowledge about the environment. For example, in an abruptly changing environment, knowledge of the number of abrupt changes is required to optimally tune the window size \cite{garivier2008upperconfidenceboundpoliciesnonstationary}. Finally, we note that the 2/F/B feedback model is of independent theoretical interest beyond the 2/B/B model studied in \cite{chen2020thompson,gupta2022online}.

\section{Algorithm Design}
\label{section:algorithm_design}
Recall the step-by-step summary of the sequence of events for the adaptive portion selection problem under 2/F/B feedback presented in Section~\ref{section:system_model}. In this section, we focus on step~\ref{step_one}. Specifically, how to design an adaptive portion selection algorithm to select the delivery portion $i(t)$ in timeslot $t$ given the history of observations $(\mathbf{X}(\tau), Y_{i(\tau)}(\tau),  i(\tau))_{\tau=1}^{t-1}$. Recall that the optimal policy always selects $i^* \in \argmax_{i\in[N]} \alpha_i \beta_i$ in each timeslot $t$. Since $(\alpha_i, \beta_i)_{i=1}^N$ are unknown, we instead choose 
\begin{equation}
\label{alg:policy}
i(t) \in \argmax_{i\in[N]}\, \overline{\alpha}_i(t)\, \theta_{\beta,i}(t).
\end{equation}
where each $\overline{\alpha}_i(t)$ and $\theta_{\beta,i}(t)$ are our best estimates of $\alpha_i$ and $\beta_i$ respectively, estimated from the history of observations. The details of how these estimates are chosen are given in the following subsections, and our algorithm, AdaPort is given in Algorithm~\ref{alg:mixed_feedback}.

\subsection{Prediction Rate Estimate}
Recall that the prediction outcome is a full-information feedback signal. Then to estimate $\alpha_i$ for each portion $i \in [N]$, we simply use the empirical mean $\overline{\alpha}_i(t) \triangleq \frac{1}{t-1}\sum_{\tau=1}^{t-1} X_i(\tau)$ for timeslots $t > 1$ and $\overline{\alpha}_i(1) = 0$ (the value of $\overline{\alpha}_i(1)$ is arbitrary under full-information). This estimate can be updated iteratively using the recurrence 
\begin{equation}
\label{eq:prediction_update}
\overline{\alpha}_i(t+1) = \overline{\alpha}_i(t) + \frac{1}{t}\left(X_i(t) - \overline{\alpha}_i(t)\right).
\end{equation}

\subsection{Transmission Rate Estimate}

Unlike the prediction outcome signal, the transmission outcome is a bandit feedback signal. Therefore, we need to use an estimate of each $\beta_i$ that is able to balance exploration and exploitation. Following \cite{chen2020thompson}, we adopt a Bayesian approach based on Thompson Sampling to estimate the transmission rate. Under Thompson Sampling, the estimate $\theta_{\beta,i}(t)$ is drawn from a posterior distribution representing our current belief about the unknown parameter $\beta_i$. For Bernoulli rewards, we draw the Thompson sample $\theta_{\beta,i}(t) \sim \Beta(S_{\beta,i}(t) + 1, F_{\beta,i}(t) + 1)$, where $S_{\beta,i}(t)$ and $F_{\beta,i}(t)$ denote the number of observed transmission successes and failures respectively. In each timeslot the samples are drawn independently across portions, using randomness independent of the feedback, and ties in \eqref{alg:policy} are broken by any fixed rule. These are initialized as $S_{\beta,i}(1) = 0$ and $F_{\beta,i}(1) = 0$ and updated as
\begin{equation}
\label{eq:transmission_update}
\begin{aligned}
S_{\beta,i}(t + 1) &= S_{\beta,i}(t) + \id\{i(t) = i\}\,Y_{i(t)}(t), \text{ and} \\
F_{\beta,i}(t + 1) &= F_{\beta,i}(t) + \id\{i(t) = i\}\left(1-Y_{i(t)}(t) \right). \\
\end{aligned}
\end{equation}
Exploration of each portion $i$ is accomplished in Thompson sampling due to the variance in the Beta distribution, which shrinks as the number of plays $n_i(t)$ of portion $i$ increases.

 While the empirical mean and the Thompson sample are each individually standard, the novelty of AdaPort lies in how it exploits the hybrid feedback structure: the full-information empirical mean $\overline{\alpha}_i(t)$ and the bandit Thompson sample $\theta_{\beta,i}(t)$ are combined through the product selection rule \eqref{alg:policy}, which couples the two feedback levels. This coupling makes the regret analysis nontrivial, because a suboptimal arm is selected when the random product $\overline{\alpha}_i(t)\,\theta_{\beta,i}(t)$ exceeds a threshold, rather than when a single posterior sample exceeds a fixed mean. Handling this random, full-information-dependent threshold is the main departure from the standard Thompson Sampling analysis, as we detail in the proof of Theorem~\ref{theorem:upper_bound}.

\begin{algorithm}
\caption{Hybrid Feedback-Driven Learning for Adaptive Portion Selection Algorithm (AdaPort)}\label{alg:mixed_feedback}
\begin{algorithmic}[1]  
    \STATE \textbf{Initialization:} Set $S_{\beta,i}(1) = F_{\beta,i}(1) =  \overline{\alpha}_i(1) = 0 \quad\forall\,\,i \in [N]$.
    \FOR{each $t=1,2,\ldots$}
        \FOR{each portion $i$}
            \STATE Draw $
            \displaystyle\theta_{\beta,i}(t) \sim \text{Beta}\left(S_{\beta,i}(t)+1, F_{\beta,i}(t)+1\right).
            $
        \ENDFOR
        \STATE Send the delivery portion $\displaystyle i(t) \in \arg\max_{i} \overline{\alpha}_i(t)\, \theta_{\beta,i}(t)$.
        \STATE Receive the feedback $(\mathbf{X}(t), Y_{i(t)}(t))$. \footnotemark
        \FOR{each portion $i$}
            \STATE Update $\overline{\alpha}_i(t)$ according to \eqref{eq:prediction_update}.
            \STATE Update $S_{\beta,i}(t)$ and $F_{\beta,i}(t)$ according to \eqref{eq:transmission_update}.
        \ENDFOR
    \ENDFOR
\end{algorithmic}
\end{algorithm}
\footnotetext{Refer to steps (3) and (4) in the step-by-step summary of events in Section~\ref{section:system_model}.}

We conclude this section by examining the computational overhead of AdaPort. In each timeslot, AdaPort draws one Beta sample $\theta_{\beta,i}(t)$ per portion, selects the maximizer of the product $\overline{\alpha}_i(t)\,\theta_{\beta,i}(t)$, and then performs a constant-time update of the empirical prediction mean $\overline{\alpha}_i(t)$ via \eqref{eq:prediction_update} and of the success/failure counts $S_{\beta,i}(t),F_{\beta,i}(t)$ via \eqref{eq:transmission_update}. Drawing a Beta sample and updating a Beta posterior are both $O(1)$ operations, so the per-timeslot time and memory complexity of AdaPort is $O(N)$, linear in the number of candidate portions $N$ with a small constant and no dependence on the time horizon $T$. The per-timeslot cost therefore grows linearly in $N$ and stays small for the dozens of candidate portions that arise in high-resolution (e.g., 8K) tiled delivery. It consists of $N$ Beta draws and $2N$ counter updates, none of which involves the tile data itself, so it is small relative to the rendering and transmission cost incurred in each frame interval.

\section{Performance Analysis}
\label{section:performance_analysis}

In this section, we begin by deriving the regret lower bound for the online learning problem with 2/F/B feedback. We then compare this lower bound against the known lower bounds for 2/B/B feedback derived by Gupta et al. \cite{gupta2022online} and for 1/B feedback derived by Lai and Robbins \cite{lai1985asymptotically}. To further support our motivation, we present simple illustrative examples that highlight the significant improvement in the lower bound under the 2/F/B feedback setting. Finally, we establish the regret upper bound of AdaPort (Algorithm~\ref{alg:mixed_feedback}), which matches the lower bound asymptotically. 




\subsection{Regret Lower Bound under 2/F/B Feedback}


In the following theorem, we present our asymptotic instance-dependent regret lower bound for online learning under 2/F/B feedback. 
\begin{theorem}
\label{theorem:lower_bound}
(Lower bound) Consider an online learning algorithm under 2/F/B feedback that is \emph{consistent} \cite[Definition 16.1]{lattimore2020bandit}, i.e., on every problem instance of the model in Section~\ref{section:system_model}, its regret satisfies $R(T) = o(T^\delta)$ for every $\delta > 0$. Then the algorithm is subject to the following regret lower bound:
\begin{equation}
\label{eq:one_full_one_bandit_lower_bound}
\liminf_{T \rightarrow \infty} \frac{R(T)}{\log T} \geq \sum_{i \neq i^*\,:\,\,\alpha_{i}>\alpha_{i^*}\beta_{i^*}} \frac{\Delta_i}{ d\mleft(\beta_i, \frac{\alpha_{i^*}\beta_{i^*}}{\alpha_i}\mright)}.
\end{equation}
\end{theorem}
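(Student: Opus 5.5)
We follow the standard change-of-measure route for consistent algorithms, as in Lai and Robbins \cite{lai1985asymptotically} and \cite[Ch.~16]{lattimore2020bandit}, adapted to the hybrid observation structure. The plan is to show that, for every suboptimal arm $i$ with $\alpha_i > \alpha_{i^*}\beta_{i^*}$,
\begin{equation*}
\liminf_{T\to\infty} \frac{\bE[n_i(T)]}{\log T} \geq \frac{1}{d\mleft(\beta_i, \frac{\alpha_{i^*}\beta_{i^*}}{\alpha_i}\mright)}.
\end{equation*}
Plugging this into the decomposition $R(T)=\sum_{i\neq i^*}\Delta_i \bE[n_i(T)]$ from \eqref{system_model_regret} then gives \eqref{eq:one_full_one_bandit_lower_bound}. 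Arms with $\alpha_i \le \alpha_{i^*}\beta_{i^*}$ contribute zero to the bound: they cannot be made optimal by perturbing $\beta_i$ alone, since $\beta_i\le 1$ (and strictly less is needed for uniqueness of the optimum).

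\textbf{Construction of the alternative instance.} Fix such an arm $i$ and $\epsilon>0$ small. Keep the distribution of $\mathbf{X}(t)$ unchanged, and keep $\beta_j$ unchanged for all $j\neq i$. Replace $\beta_i$ by $\beta_i'$ with $\alpha_i\beta_i' > \alpha_{i^*}\beta_{i^*}$ and $\beta_i' \le 1$; this is possible because $\alpha_i > \alpha_{i^*}\beta_{i^*}$. The condition for this to be admissible is that the perturbed instance is still in the model class, which holds since only an independent Bernoulli marginal changes. Take $\beta_i' \downarrow \alpha_{i^*}\beta_{i^*}/\alpha_i$, so that $i$ becomes the unique optimal arm in the alternative instance.

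\textbf{Divergence decomposition.} Next, compute the KL divergence between the laws of the full observation history $(\mathbf{X}(\tau), Y_{i(\tau)}(\tau), i(\tau))_{\tau\le T}$ under the two instances. The $\mathbf{X}$ components have identical laws under both instances, and $Y_{i(t)}(t)$ is independent of $\mathbf{X}(t)$. The algorithm's internal randomness is common to both. By the chain rule, the divergence therefore equals $\bE[n_i(T)]\, d(\beta_i,\beta_i')$. This is the key point where full-information feedback on $\mathbf{X}$ shows up: the prediction level contributes nothing, so only the $\beta$ perturbation is paid for. This also explains why perturbing $\alpha_i$ is never useful for the adversary. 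Changing $\alpha_i$ would cost divergence at every timeslot, i.e.\ order $T$, regardless of plays, and the optimal alternative should not touch it. Strictly, we only need one alternative to obtain the bound, so no optimization over alternatives is required.

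\textbf{High-probability inequality and limit.} Apply the Bretagnolle--Huber inequality to the event $A=\{n_i(T) > T/2\}$, as in the proof of \cite[Theorem 16.2]{lattimore2020bandit}. Consistency on the original instance makes $\Pr(A)$ small, and consistency on the alternative makes $\Pr'(A^c)$ small. Both probabilities are bounded by $o(T^{\delta})/(cT)$, where $c$ is the relevant positive gap. This yields $\bE[n_i(T)]\, d(\beta_i,\beta_i') \ge (1-\delta)\log T - O(1)$. Dividing by $\log T$, letting $T\to\infty$, then $\delta\to0$, and finally $\beta_i'\downarrow \alpha_{i^*}\beta_{i^*}/\alpha_i$ using continuity of $d(\beta_i,\cdot)$ on $(\beta_i,1]$ gives the claim.

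\textbf{Main obstacle.} The main care point is the divergence decomposition in the hybrid setting. One must verify rigorously that the componentwise-dependent $\mathbf{X}(t)$ cancels in the likelihood ratio. This holds because its joint law is identical under both instances and independent of $\mathbf{Y}(t)$. One must also handle the edge cases $\beta_i\in\{0,1\}$ via the stated conventions: if $d=\infty$ the term is $0$ and the bound is trivial.
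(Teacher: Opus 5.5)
Your proposal is correct and follows essentially the same route as the paper. The paper also keeps the law of $\mathbf{X}(t)$ fixed and raises only $\beta_j$, parametrized as $\beta_j'=\beta_j+\lambda/\alpha_j$ with $\lambda\in(\Delta_j,\alpha_j(1-\beta_j))$. It proves a divergence decomposition lemma in which the $\mathbf{X}$ term vanishes, applies Bretagnolle--Huber to the event $\{n_j(T)<T/2\}$ (the complement of your $A$), and finishes by letting $\lambda\downarrow\Delta_j$ and using continuity of $d(\beta_j,\cdot)$.
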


\begin{proof}
Recall from \eqref{system_model_regret} that the regret can be decomposed as $R(T) = \sum_{i \neq i^*}\Delta_i \mathbb{E}\left[n_i(T) \right]$. Then it suffices to show that $
\liminf_{T\to\infty}\frac{\bE\left[n_i(T)\right]}{\log T}
\ge \frac{1}{d\!\left(\beta_i,\frac{\alpha_{i^*}\beta_{i^*}}{\alpha_i}\right)}, \forall\, i \in \mathcal{I}$,
where
$
\mathcal{I} \triangleq \{\, i\neq i^* : \alpha_i > \alpha_{i^*}\beta_{i^*} \,\}$.
Given arm $j \in \mathcal{I}$, fix $\lambda \in (\Delta_j, \alpha_j(1-\beta_j))$. We construct a new problem instance in which the distribution of $\mathbf{X}(t)$ is unchanged and the transmission rates are $(\beta_i')_{i=1}^N$ with $\beta_j' = \beta_j + \lambda / \alpha_j$, and $\beta_i' = \beta_i$ for all $i \neq j$. The remainder of the proof is similar to the usual instance-dependent lower bound for stochastic bandits (see e.g. Theorem 16.2 in \cite{lattimore2020bandit}), with some slight modifications to account for the two levels of feedback. The full proof can be found in Appendix~\ref{appendix:lower_bound}.
\end{proof}
The right-hand side of \eqref{eq:one_full_one_bandit_lower_bound} is called the \textit{``lower bound constant"}. A key observation about the bound in Theorem~\ref{theorem:lower_bound} is that arms with $\alpha_i \leq \alpha_{i^*}\beta_{i^*}$ have zero contribution to the lower bound constant. This shows that
an optimal algorithm (i.e., one with matching upper bound) in this setting plays these arms only $o(\log T)$ times in expectation,
whereas optimal algorithms in the 2/B/B and 1/B feedback regimes require a continual $\log T$ number of plays of each suboptimal arm when $\alpha_{i^*}\beta_{i^*} < 1$. For AdaPort, the upper bound analysis in Section~\ref{subsection:upper_bound} further shows that each arm with $\alpha_i < \alpha_{i^*}\beta_{i^*}$ is played a bounded expected number of times; at the boundary $\alpha_i = \alpha_{i^*}\beta_{i^*}$, the analysis gives $o(\log T)$ expected plays.
Intuitively, if $\alpha_i \leq \alpha_{i^*} \beta_{i^*}$, since the full-information feedback structure ensures the rapid convergence of the estimation of the prediction rate, the uncertainty is governed by the estimation of the transmission rate. In this regime, suboptimal arm $i$ cannot outperform the optimal arm even if its transmission rate is maximally overestimated to $1$. Consequently, $O(\log T)$ exploration is not required to resolve the uncertainty of the estimation of transmission rate, as the selection policy can tolerate such overestimation without compromising optimality.


\begin{remark}[lower bound constant comparison]
\label{remark:size_comparison}

\begin{equation}
\begin{aligned}
&\overbrace{\sum_{\substack{i \neq i^*\,: \\\alpha_{i}>\alpha_{i^*}\beta_{i^*}}} \frac{\Delta_i}{ d\mleft(\beta_i, \frac{\alpha_{i^*}\beta_{i^*}}{\alpha_i}\mright)}}^{\text{2/F/B lower bound constant}} \\
&= \sum_{\substack{i \neq i^*\,: \\\alpha_{i}>\alpha_{i^*}\beta_{i^*}}} \frac{\Delta_i}{ d(\alpha_i, x_i) + d(\beta_i, y_i)}, \quad \begin{aligned} &x_i = \alpha_i,\\ & y_i = \frac{\alpha_{i^*}\beta_{i^*}}{\alpha_i} \end{aligned} \\
&\leq \sum_{\substack{i \neq i^*\,: \\\alpha_{i}>\alpha_{i^*}\beta_{i^*}}} \frac{\Delta_i}{ \min_{\substack{0 \leq x_i, y_i \leq 1 \\ x_i y_i \geq \alpha_{i^*} \beta_{i^*}}} d\left(\alpha_i, x_i\right)+d\left(\beta_i, y_i\right)} \\
&\overset{(i)}{\leq} \underbrace{\sum_{i \neq i^*} \frac{\Delta_i}{ \min_{\substack{0 \leq x_i, y_i \leq 1 \\ x_i y_i \geq \alpha_{i^*} \beta_{i^*}}} d\left(\alpha_i, x_i\right)+d\left(\beta_i, y_i\right)}}_{\text{2/B/B lower bound constant derived by \cite{gupta2022online}}} \\
&\overset{(ii)}{\leq} \underbrace{\sum_{i \neq i^*} \frac{\Delta_i}{ \min_{\substack{0 \leq x_i, y_i \leq 1 \\ x_i y_i \geq \alpha_{i^*} \beta_{i^*}}} d\left(\alpha_i\beta_i, x_i y_i\right)}}_{\text{1/B lower bound constant derived by \cite{lai1985asymptotically}}}
\end{aligned}
\end{equation}

The above shows that the lower bound constant under 2/F/B given in Theorem~\ref{theorem:lower_bound} is no larger than the lower bound constant under 2/B/B derived by Gupta et al. \cite{gupta2022online}, and strictly smaller on some instances (Figure~\ref{fig:lower_bounds_combined}).
This follows from the unlabelled inequality, in which the lower bound constant under 2/B/B minimizes the denominator over $(x_i, y_i)$ for the suboptimal arms satisfying $\alpha_i>\alpha_{i^*}\beta_{i^*}$, unlike the lower bound constant under 2/F/B, and from the relationship $(i)$, which adds the nonnegative terms of the arms with $\alpha_i \leq \alpha_{i^*}\beta_{i^*}$. The relationship $(ii)$ between the lower bound constants under 2/B/B feedback and 1/B feedback is proven in Theorem 2 of \cite{gupta2022online} based on the data processing inequality.
\end{remark}

We further illustrate the size comparison between the lower bound constants under 2/F/B, 2/B/B, and 1/B feedback shown in Remark~\ref{remark:size_comparison} by numerical simulations with two arms.  
Specially, we set the optimal arm's prediction and transmission rates as $\alpha_{i^*}=0.8$ and $\beta_{i^*}=0.9$, respectively. In the first example (Figure~\ref{fig:lowerbound1}), we fix the suboptimal arm’s transmission rate at 0.75 and vary its prediction rate across the set 
$[0.75, 0.8, 0.85,0.9]$. In the second example (Figure~\ref{fig:lowerbound2}), we fix the prediction rate at 0.75 and vary the transmission rate across $[0.6,0.7,0.8,0.9]$.
In Figure~\ref{fig:lower_bounds_combined}, we observe that the lower bound constants corresponding to 1/B and 2/B/B settings exhibit small differences. In contrast, the lower bound constant of 2/F/B is significantly smaller, indicating that a learning algorithm may substantially reduce its regret by exploiting the additional full-information feedback. 

\begin{figure}[ht]
    \centering
    \begin{subfigure}[b]{0.23\textwidth}
        \centering
        \includegraphics[width=\textwidth, height=0.17\textheight]{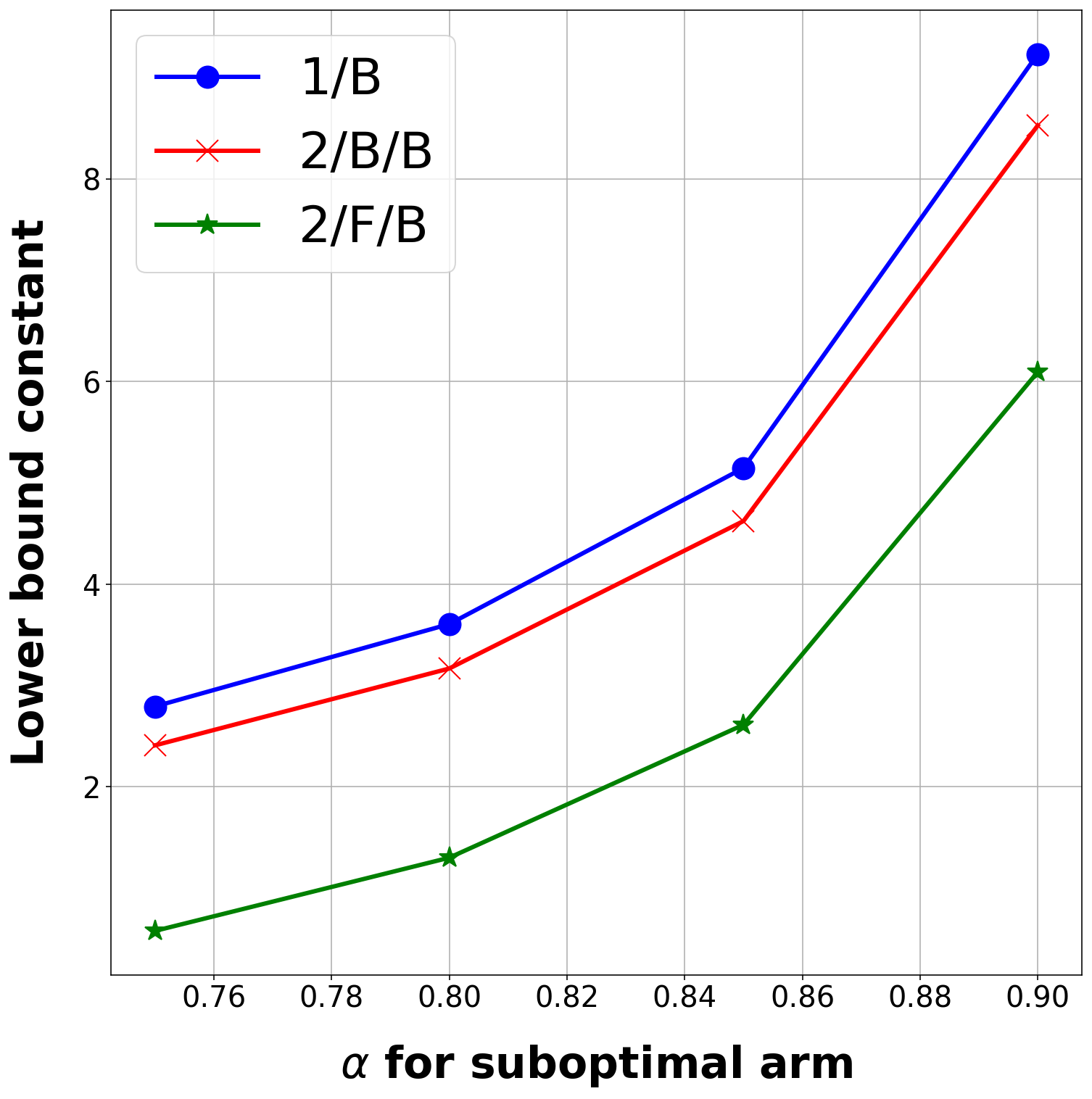}
        \caption{Fixing $\beta$ and varying $\alpha$.}
        \label{fig:lowerbound1}
    \end{subfigure}
    \hfill
    \begin{subfigure}[b]{0.23\textwidth}
        \centering
        \includegraphics[width=\textwidth, height=0.17\textheight]{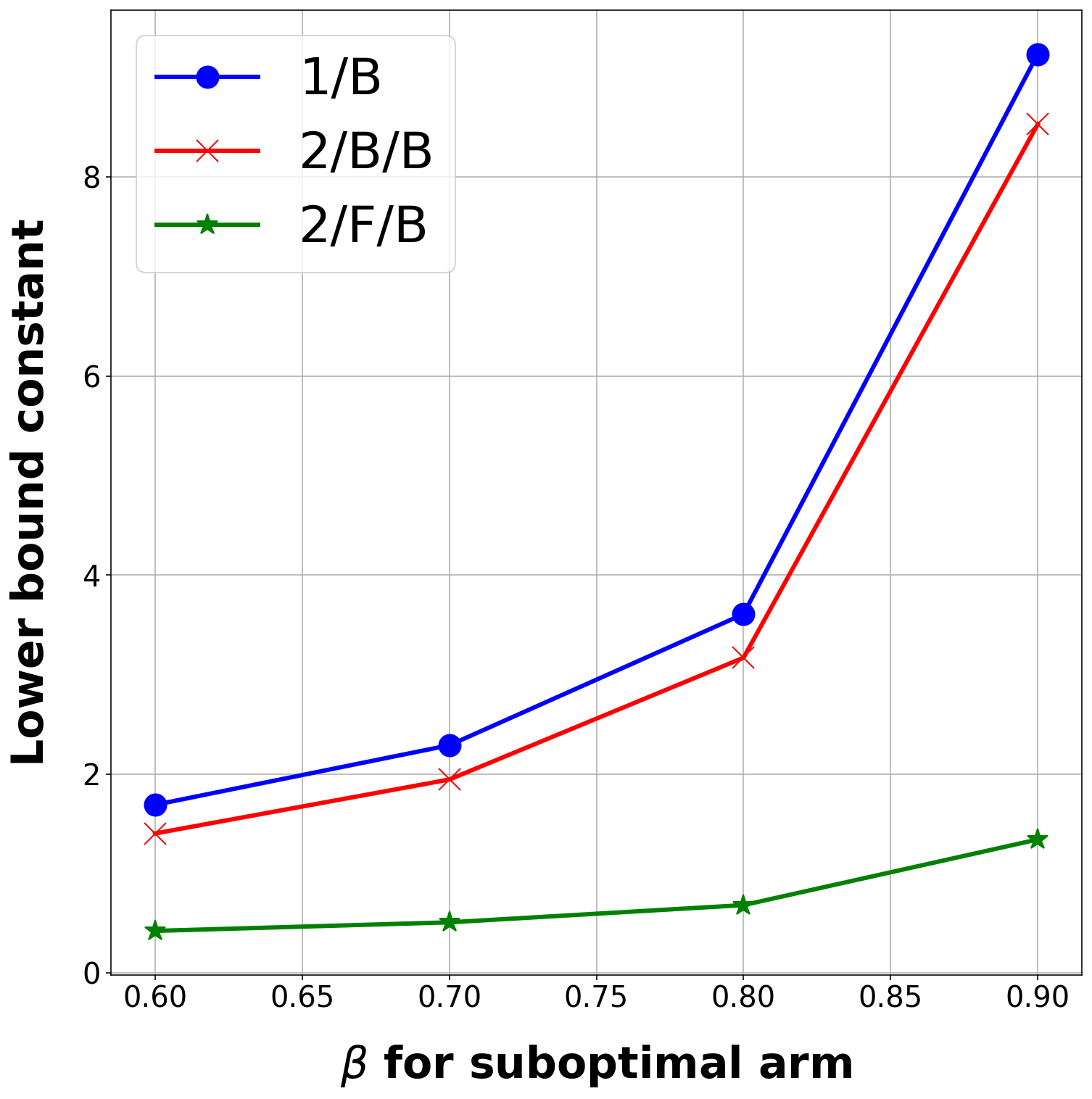}
        \caption{Fixing $\alpha$ and varying $\beta$.}
        \label{fig:lowerbound2}
    \end{subfigure}
    \caption{Lower bound constant comparison for two arms.}
    \label{fig:lower_bounds_combined}
\end{figure}

\subsection{Regret Upper Bound of AdaPort}
\label{subsection:upper_bound}

In the following theorem, we validate that AdaPort (Algorithm~\ref{alg:mixed_feedback}) achieves an upper bound constant equal to the lower bound constant in Theorem \ref{theorem:lower_bound}, demonstrating its asymptotic optimality.
\begin{theorem}
\label{theorem:upper_bound}
(Upper bound) AdaPort (Algorithm~\ref{alg:mixed_feedback}) achieves an asymptotic regret upper bound of 
\begin{equation}
\label{eqn:upper_bound_result}
\limsup _{T \rightarrow \infty} \frac{R(T)}{\log T} \leq \sum_{i \neq i^*\,:\,\,\alpha_{i}>\alpha_{i^*}\beta_{i^*}} \frac{\Delta_i}{ d\mleft(\beta_i, \frac{\alpha_{i^*}\beta_{i^*}}{\alpha_i}\mright)}.    
\end{equation}
\end{theorem}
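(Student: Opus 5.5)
Since $R(T)=\sum_{i\neq i^*}\Delta_i\bE[n_i(T)]$ by \eqref{system_model_regret}, I will bound $\bE[n_i(T)]$ separately for each suboptimal arm $i$, splitting into the cases $\alpha_i>\alpha_{i^*}\beta_{i^*}$ and $\alpha_i\le\alpha_{i^*}\beta_{i^*}$. The aim is $\limsup_T \bE[n_i(T)]/\log T\le 1/d(\beta_i,\alpha_{i^*}\beta_{i^*}/\alpha_i)$ in the first case and $o(\log T)$ in the second. The template is the Agrawal--Goyal analysis of Thompson sampling for Bernoulli bandits~\cite{agrawal2017near}. The change is that the fixed threshold is replaced by a random threshold depending on $\overline{\alpha}_i(t)$ and $\overline{\alpha}_{i^*}(t)$.

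First, fix $\epsilon>0$ and introduce three events.
\begin{itemize}
\item[(a)] $E^{\alpha}(t)=\{|\overline{\alpha}_j(t)-\alpha_j|\le\epsilon\ \forall j\}$. Because the prediction feedback is full-information, Hoeffding's inequality gives $\sum_t \Pr(E^{\alpha}(t)^c)\le \sum_t 2N e^{-2\epsilon^2(t-1)}=O(1)$, regardless of the policy. Off this event I can therefore pay only a constant.
\item[(b)] Choose thresholds $x_i<y_i$ with $\alpha_i y_i<\alpha_{i^*}\beta_{i^*}-c\epsilon$, $y_i\to\alpha_{i^*}\beta_{i^*}/\alpha_i$ and $x_i\downarrow\beta_i$ as $\epsilon\to0$. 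This is possible when $\alpha_i>\alpha_{i^*}\beta_{i^*}$, since then $\alpha_{i^*}\beta_{i^*}/\alpha_i<1$, and $\beta_i<\alpha_{i^*}\beta_{i^*}/\alpha_i$ because $\Delta_i>0$. Set $E^{\mu}_i(t)=\{\hat\beta_i(t)\le x_i\}$, where $\hat\beta_i$ is the empirical mean of the observed transmissions.
\item[(c)] Set $E^{\theta}_i(t)=\{\overline{\alpha}_i(t)\theta_{\beta,i}(t)\le \alpha_i y_i+\epsilon\}$, which on $E^{\alpha}$ is implied by $\theta_{\beta,i}(t)\le y_i$ up to $O(\epsilon)$ adjustments.
\end{itemize}
The decomposition is then
\[
\bE[n_i(T)]\le \sum_t\Pr(i(t)=i,E^\alpha,E^\mu_i,E^\theta_i)+\sum_t\Pr(i(t)=i,E^\alpha,E^\mu_i,(E^\theta_i)^c)+\sum_t\Pr(i(t)=i,(E^\mu_i)^c)+O(1).
\]
The third term is $O(1)$ by the standard Chernoff argument for the empirical mean. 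The second is at most $L_i(T)+O(1)$ with $L_i(T)=\log T/d(x_i,y_i)$, using the Beta--binomial identity $1-F^{\mathrm{Beta}}_{s+1,n-s+1}(y)=F^B_{n+1,y}(s)$ and the bound $F^B_{n+1,y_i}(n x_i)\le e^{-n d(x_i,y_i)}$. Letting $\epsilon\to0$ after $T\to\infty$ turns $1/d(x_i,y_i)$ into $1/d(\beta_i,\alpha_{i^*}\beta_{i^*}/\alpha_i)$, by continuity of $d$ in its second argument on $(0,1)$.

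The main obstacle is the first term, where arm $i$ is chosen even though its index is low. On $E^\alpha$, with $\alpha_i y_i+\epsilon$ strictly below $\alpha_{i^*}\beta_{i^*}$, this requires $\overline{\alpha}_{i^*}(t)\theta_{\beta,i^*}(t)$ to be small, that is, $\theta_{\beta,i^*}(t)\le z\triangleq(\alpha_i y_i+\epsilon)/(\alpha_{i^*}-\epsilon)<\beta_{i^*}$. I will follow the Agrawal--Goyal lemma with $p_{t}=\Pr(\theta_{\beta,i^*}(t)>z\mid\mathcal{F}_{t-1})$: $\Pr(i(t)=i,\ldots\mid\mathcal{F}_{t-1})\le \frac{1-p_t}{p_t}\Pr(i(t)=i^*,\ldots\mid\mathcal{F}_{t-1})$. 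The work is to check that this still holds with the random thresholds. Conditional on $\mathcal{F}_{t-1}$, the $\overline{\alpha}_j(t)$ are fixed and the $\theta$'s are independent across arms. So I can define the threshold as the $\mathcal{F}_{t-1}$-measurable quantity $\max_{j\ne i^*}\overline{\alpha}_j(t)\theta_{\beta,j}(t)$ restricted to the good events, and compare it with $\overline{\alpha}_{i^*}(t)\theta_{\beta,i^*}(t)$. Since $z$ is deterministic, $\sum_k\bE[1/p_{\tau_k+1}-1]=O(1)$ follows from the known bound over the play counts of $i^*$.

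For arms with $\alpha_i<\alpha_{i^*}\beta_{i^*}$, I take $y_i=1$. Then $\overline{\alpha}_i\theta_{\beta,i}\le\alpha_i+\epsilon<\alpha_{i^*}\beta_{i^*}-\epsilon'$ always holds on $E^\alpha$, so $E^\theta_i$ holds automatically and only the first term and the $O(1)$ terms remain. This gives bounded $\bE[n_i(T)]$. For the boundary case $\alpha_i=\alpha_{i^*}\beta_{i^*}$, I take $y_i$ close to $1$ and allow $\epsilon$ to shrink, which yields $o(\log T)$. Summing $\Delta_i\bE[n_i(T)]$ over arms then gives \eqref{eqn:upper_bound_result}.
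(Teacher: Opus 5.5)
Your proposal is correct and follows essentially the paper's proof. It uses the same Agrawal--Goyal-style decomposition: full-information concentration of $\overline{\alpha}$ contributes a constant, and the ratio lemma for $i^*$ works because lower-bounding $\overline{\alpha}_{i^*}(t)$ makes the threshold on $\theta_{\beta,i^*}(t)$ deterministic. The Beta--binomial identity plus Chernoff gives the $\log T/d(\cdot,\cdot)$ term, and for $\alpha_i<\alpha_{i^*}\beta_{i^*}$ the product index of arm $i$ cannot reach the threshold once $\overline{\alpha}_i$ concentrates; your choice $y_i=1$ is a tidier way of saying this.

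One small slip: $\max_{j\neq i^*}\overline{\alpha}_j(t)\theta_{\beta,j}(t)$ is not $\mathcal{F}_{t-1}$-measurable. What the ratio lemma needs, and what the paper uses through its event $B$, is that this quantity is conditionally independent of $\theta_{\beta,i^*}(t)$ given $\mathcal{F}_{t-1}$.
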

\begin{proof}
Recall from \eqref{system_model_regret} that the regret can be decomposed as $R(T) = \sum_{i \neq i^*}\Delta_i \mathbb{E}\left[n_i(T) \right]$. Fix an arbitrary suboptimal arm $i \neq i^*$. Then it suffices to show that
\begin{equation}
\label{eq:suffices_to_show}
\limsup_{T\rightarrow\infty} \frac{\bE\left[n_i(T)\right]}{\log T} \leq \begin{cases}
\frac{1}{d(\beta_i,\frac{\alpha_{i^*}\beta_{i^*}}{\alpha_i})} &     \alpha_{i} > \alpha_{i^*}\beta_{i^*} \\
0 & \alpha_{i} \leq \alpha_{i^*}\beta_{i^*}
\end{cases}.
\end{equation}
We begin by selecting $\epsilon_1, \epsilon_2, \epsilon_3 > 0$ that satisfy
\begin{equation}
\label{eqn:epsilon_range}
   (\alpha_i + \epsilon_1)(\beta_i + \epsilon_1) < \alpha_{i^*} \beta_{i^*} - \epsilon_3 < (\alpha_{i^*} - \epsilon_2) \beta_{i^*}, 
\end{equation}
where the first inequality is applied in deriving \eqref{decompose_second_term}, while the second inequality is applied in deriving \eqref{decompose_first_term}.

In particular, fix $\epsilon_3 \in (0, \epsilon_3^{\max})$ where 
\begin{equation*}
\epsilon_3^{\max} \triangleq \begin{cases}
\Delta_i &  \alpha_i \geq \alpha_{i^*} \beta_{i^*} \\
\alpha_{i^*} \beta_{i^*} - \alpha_i & \alpha_i < \alpha_{i^*} \beta_{i^*}.
\end{cases}.
\end{equation*}
\begin{figure}[htbp]
  \centering
  \includegraphics[width=0.35\textwidth, height=0.15\textwidth]{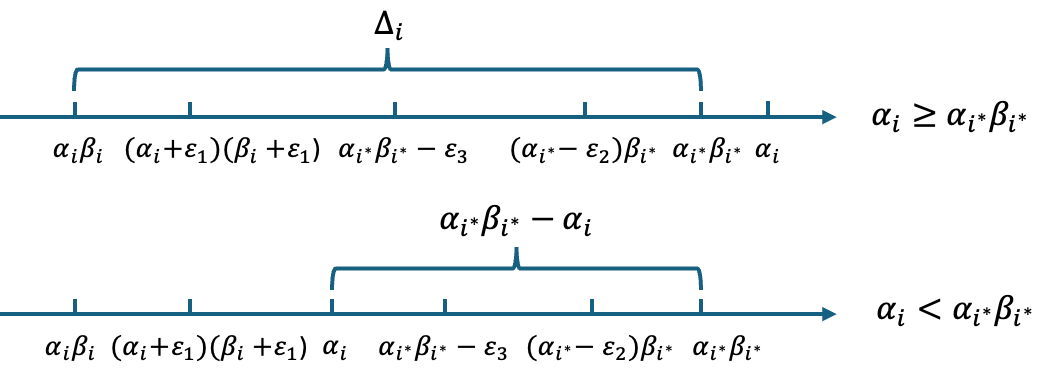}
  \caption{The range of $\epsilon_3$.}
  \label{fig:epsilon_range}
\end{figure}

Then define the intervals $\Gamma_1 \triangleq \left(0,\epsilon_1^{\max}\right)$  and $\Gamma_2 \triangleq \left(0,\ \frac{ \epsilon_3 }{ \beta_{i^*} } \right)$ where $\epsilon_1^{\max} \triangleq \frac{ \sqrt{(\alpha_i + \beta_i)^2 + 4(\Delta_i - \epsilon_3)} - (\alpha_i + \beta_i)}{2}$ and fix $\epsilon_1 \in \Gamma_1$ and $\epsilon_2 \in \Gamma_2$. Note that $\epsilon_1^{\max}$ is a zero of the quadratic function $q_i(\epsilon_1) = \epsilon_1^2 + (\alpha_i + \beta_i)\,\epsilon_1 - \bigl(\Delta_i - \epsilon_3\bigr)$. Then since $\epsilon_1 < \epsilon_1^{\max}$, we have $q_i(\epsilon_1) < 0$ due to the convexity of $q_i(\epsilon_1)$. Rearranging gives $\epsilon_1^2 + (\alpha_i + \beta_i)\,\epsilon_1 < \Delta_i - \epsilon_3$, and adding $\alpha_i \beta_i$ to both sides  and factoring the left-hand side gives $(\alpha_i + \epsilon_1)(\beta_i + \epsilon_1) < \alpha_{i^*} \beta_{i^*} - \epsilon_3$. Combining this lower bound with the upper bound $\alpha_{i^*} \beta_{i^*} - \epsilon_3 < (\alpha_{i^*} - \epsilon_2) \beta_{i^*}$, which follows directly from the definition of $\Gamma_2$ and the fact that $\epsilon_2 \in \Gamma_2$, gives the desired \eqref{eqn:epsilon_range}.

Next, we use $\epsilon_1, \epsilon_2, \epsilon_3$ to define the following three events, which will be used to decompose $\mathbb{E}\left[n_i(T) \right]$:
\begin{definition}
\label{def:three_events}
(Events $ E_i^\mu(t) $, $ G_{i^*}^\mu(t) $, and $ E_i^\theta(t) $)

For each time step $t$, we define the following events:
\begin{enumerate}
\item 
$ E_i^\mu(t) \triangleq \left\{ \left| \overline{\alpha}_i(t) - \alpha_i \right| \leq \epsilon_1,\ \left| \overline{\beta}_i(t) - \beta_i \right| \leq \epsilon_1 \right\},$
where
$$ 
\overline{\beta}_i(t) = 
\begin{cases}
\frac{1}{n_i(t-1)} \sum_{s=1}^{n_i(t-1)} Y_i(\tau_{s,i}), & \text{if } n_i(t-1) > 0 \\
0, & \text{otherwise}
\end{cases},
 $$
 and $\tau_{s,i}$ denotes the timeslot that arm $i$ was played for the $s$-th time.
 
\item $
G_{i^*}^\mu(t) \triangleq \left\{ \left| \overline{\alpha}_{i^*}(t) - \alpha_{i^*} \right| \leq \epsilon_2 \right\}.
$

\item $
E_i^\theta(t) \triangleq \left\{ \overline{\alpha}_i(t)\, \theta_{\beta,i}(t) \leq \alpha_{i^*} \beta_{i^*} - \epsilon_3 \right\}.
$
\end{enumerate}
\end{definition}
Consider the following four events composed from the events from Definition~\ref{def:three_events}: 

(i) $\{i(t)=i\} \cap E_i^{\mu}(t)\cap E_i^\theta(t)\cap G_{i^*}^\mu(t)$, 

(ii) $\{i(t)=i\} \cap E_i^{\mu}(t)\cap E_i^\theta(t)\cap \overline{G_{i^*}^\mu(t)}$, 

(iii) $\{i(t)=i\} \cap E_i^{\mu}(t)\cap \overline{E_i^\theta(t)}$, and 

(iv) $\{i(t)=i\} \cap \overline{E_i^{\mu}(t)}$. 

These events are mutually exclusive and collectively exhaustive, and moreover, for event (ii), we have
$
\{i(t)=i\} \cap E_i^{\mu}(t)\cap E_i^\theta(t)\cap \overline{G_{i^*}^\mu(t)} 
\subseteq \overline{G_{i^*}^\mu(t)}$.
By the monotonicity of probability and the law of total probability, we therefore bound
\begin{equation}
\label{eqn:initial_decomposition}
\begin{aligned}
 \mathbb{E}\left[n_i(T) \right]
&\textstyle = \sum_{t=1}^T\Pr(i(t) = i) \\
&\leq \textstyle \sum_{t=1}^T\Pr \left(i(t)=i, E_i^\mu(t), E_i^\theta(t),  G_{i^*}^\mu(t)\right) \\
&\phantom{\,=\,}+ \textstyle\sum_{t=1}^T \Pr \left(\overline{G_{i^*}^\mu(t)}\right) \\
&\phantom{\,=\,}+ \textstyle \sum_{t=1}^T\Pr\left(i(t)=i, E_i^\mu(t), \overline{E_i^\theta(t)}\right) \\
&\phantom{\,=\,}+  \textstyle \sum_{t=1}^T\Pr\left(i(t)=i, \ol{E_i^\mu(t)}\right).
\end{aligned}
\end{equation}

The following lemma, whose proof can be found in Appendix~\ref{appendix:upper_bound} and relies on the relationship \eqref{eqn:epsilon_range} between $\epsilon_1,\epsilon_2,\epsilon_3$, bounds each of the terms above:
\begin{lemma}
\label{lemma:upper_bound}
For the fixed $\epsilon_1, \epsilon_2, \epsilon_3$ chosen above, each of the terms in \eqref{eqn:initial_decomposition} can be bounded by
\begin{equation}
\label{decompose_first_term}
\textstyle \sum_{t=1}^T \Pr \left(i(t)=i, E_i^\mu(t), E_i^\theta(t),  G_{i^*}^\mu(t)\right)
\leq M_1 ,
\end{equation}
\begin{align}
\label{decompose_second_term}
& \textstyle \sum_{t=1}^T \Pr\left(i(t)=i, E_i^\mu(t), \overline{E_i^\theta(t)}\right) \nonumber\\
& \leq
\begin{cases}
\frac{\log T}{d(\beta_i+\epsilon_1,\frac{\alpha_{i^*}\beta_{i^*}-\epsilon_3}{\alpha_i+\epsilon_1})}+2 &  \alpha_i \geq \alpha_{i^*} \beta_{i^*} \\
M_2 & \alpha_i < \alpha_{i^*} \beta_{i^*}
\end{cases},
\end{align}
\begin{equation}
\label{decompose_third_term}
\textstyle  \sum_{t=1}^T \Pr\left(i(t)=i, \ol{E_i^\mu(t)}\right) 
\leq M_3,
\end{equation}
\begin{equation}
\label{decompose_fourth_term}
\textstyle \sum_{t=1}^T \Pr \left(\overline{G_{i^*}^\mu(t)}\right) 
\leq M_4,
\end{equation}
where $M_1\triangleq C_{\mathrm{post}}(\beta_{i^*}, x)$ is the finite constant given in Lemma~\ref{lem:posterior_sum} in Appendix~\ref{appendix:upper_bound}, $x \triangleq \frac{\alpha_{i^*} \beta_{i^*}-\epsilon_3}{\alpha_{i^*}-\epsilon_2}$, $M_2\triangleq 1+\frac{1}{2(\alpha_{i^*} \beta_{i^*}-\alpha_i-\epsilon_3)^2}$ for $\alpha_i < \alpha_{i^*}\beta_{i^*}$,$ M_3\triangleq \frac{2}{\epsilon_1^2}+1$, $M_4\triangleq \frac{1}{\epsilon_2^2}+1$.
\end{lemma}
From \eqref{eqn:initial_decomposition} and Lemma~\ref{lemma:upper_bound}, it follows that if $\alpha_i < \alpha_{i^*}\beta_{i^*}$, then 
\begin{equation}
\mathbb{E}[n_i(T)] \leq  M_1 + M_2 + M_3 + M_4.
\end{equation}
In this case, since $M_1,M_2, M_3, M_4$ are constants, it follows that $\limsup_{T\rightarrow\infty} \frac{\bE\left[n_i(T)\right]}{\log T} =0$, which agrees with \eqref{eq:suffices_to_show}. On the other hand, if $\alpha_i \geq \alpha_{i^*}\beta_{i^*}$, then \eqref{eqn:initial_decomposition} and Lemma~\ref{lemma:upper_bound} give that
\begin{equation}
\mathbb{E}[n_i(T)] \leq
      \frac{\log T}{
        d\!\left(\beta_i+\epsilon_1,\,
        \tfrac{\alpha_{i^*}\beta_{i^*}-\epsilon_3}{\alpha_i+\epsilon_1}\right)} + M_1 + M_3 + M_4+2,
\end{equation}
and therefore
\begin{equation}
\begin{aligned}
\limsup_{T\rightarrow\infty} \frac{\bE\left[n_i(T)\right]}{\log T}
&\leq \frac{1}{d(\beta_i+\epsilon_1,\frac{\alpha_{i^*}\beta_{i^*}-\epsilon_3}{\alpha_i+\epsilon_1})}. \\
\end{aligned}
\end{equation}
Since this holds for every $\epsilon_1 \in \Gamma_1$, letting $\epsilon_1 \downarrow 0$ gives
$\limsup_{T\rightarrow\infty} \frac{\bE\left[n_i(T)\right]}{\log T} \leq \frac{1}{d(\beta_i,\frac{\alpha_{i^*}\beta_{i^*}-\epsilon_3}{\alpha_i})}$.
Since $\epsilon_3$ can be taken arbitrarily small and positive, we conclude that
$\limsup_{T\rightarrow\infty} \frac{\bE\left[n_i(T)\right]}{\log T} \leq \frac{1}{d(\beta_i,\frac{\alpha_{i^*}\beta_{i^*}}{\alpha_i})}$, which agrees with \eqref{eq:suffices_to_show}. In the boundary case $\alpha_i = \alpha_{i^*}\beta_{i^*}$, we have $\beta_i < 1$ since $\Delta_i > 0$, so the right-hand side is $1/d(\beta_i, 1) = 0$ and $\bE[n_i(T)] = o(\log T)$.
\end{proof}

\begin{remark}
According to Lemma~\ref{lemma:upper_bound}, the regret contribution from 
$\sum_{t=1}^T \Pr\left(i(t)=i, E_i^\mu(t), \overline{E_i^\theta(t)}\right)$
is of order $O(\log T)$. Under the 2/B/B feedback model, even if the empirical estimates $\overline{\alpha}_i(t)$ and $\overline{\beta}_i(t)$ are close to their true means $\alpha_i$ and $\beta_i$, the arm selection remains uncertain due to the stochasticity of both $\theta_{\alpha,i}(t)$ and $\theta_{\beta,i}(t)$. These variables represent Thompson Sampling draws for the prediction and transmission rates, respectively, and are updated based on bandit feedback \cite{chen2020thompson, gupta2022online}. Thus, uncertainty persists in both dimensions.
In contrast, under the 2/F/B feedback model, the prediction rate $\alpha_i$ is estimated with full information, so $\overline{\alpha}_i(t)$ concentrates rapidly around $\alpha_i$. As a result, the only remaining source of exploration-induced randomness is $\theta_{\beta,i}(t)$, which governs the uncertainty in the transmission rate. This reduction in uncertainty leads to more confident decision-making and lower regret in practice.
\end{remark}

\section{Empirical Evaluation}
\label{section:empirical_evaluation}

In this section, we evaluate AdaPort on data measured from an end-to-end panoramic scene delivery testbed, in which the head motion of a user interacting with a panoramic video stream and the wireless transmission outcome are produced by the same live session. We compare the performance of our algorithm against four baseline algorithms and a windowed variant of AdaPort:
\begin{enumerate}
    \item \textbf{Thompson sampling under 1/B feedback} (1/B-TS): the standard Thompson sampling MAB algorithm.
    \item \textbf{Thompson sampling under 2/B/B feedback} (2/B/B-TS): two-level Thompson sampling  (Chen et al. \cite{chen2020thompson}).
    \item \textbf{EXP3 under 1/B feedback} (1/B-EXP3): an optimal algorithm for adversarial bandits (Auer et al. \cite{auer2002nonstochastic}).
    \item \textbf{Heuristic minimum scene
delivery} (Heuristic):  a heuristic method based on the Flare system \cite{qian2018flare} that delivers only those tiles overlapping with the user's predicted viewport.
    \item \textbf{Sliding-window variant} (SW-AdaPort): AdaPort with every estimate formed from the $\tau$ most recent timeslots rather than from the whole history, so that outdated observations are discarded rather than weighted equally with recent ones. The prediction mean is taken over all $\tau$ slots in the window, since the prediction outcome is full information, while the transmission successes and failures are accumulated only over the slots in the window in which the portion was actually sent.
\end{enumerate}
(1) and (2) are state-of-the-art algorithms for the portion selection problem, while (5) is chosen to evaluate the impact of treating the prediction and transmission outcomes as i.i.d.\ in a practical system, since it retains the selection rule of AdaPort and changes only how much history the estimates are formed from. In the following subsections, we detail our findings for the testbed-based evaluation. In order to evaluate user experience, we also plot the \textit{relative throughput degradation} compared to the optimal policy, which is given by:
\begin{equation}
\label{eq:relative_degradation}
\bE\mleft[ \frac{\sum_{t=1}^T (1-Z_{i(t)}(t)) - \sum_{t=1}^T (1-Z_{i^*}(t))}{\sum_{t=1}^T (1-Z_{i^*}(t)) } \mright].
\end{equation}
This is the relative increase in the number of failed deliveries over that of the optimal policy. The optimal policy fails at least once on every recorded episode, so the denominator is positive.

\subsection{Testbed and Experimental Setup}

\subsubsection{Testbed}

The edge server is a workstation attached to an 802.11ac access point by
Gigabit Ethernet, so that the client downlink is the only wireless hop under
study. It holds the tiled panoramic content in memory and runs the portion
selection and streaming logic in Java. The client is a Google Pixel~5 running
Android, associated with the access point on the 5\,GHz band. The measurements
reported here were taken at a received signal strength between $-25$ and
$-31$\,dBm.

Each timeslot proceeds as follows. The server predicts the head pose, selects a
portion, and transmits its tiles as RTP over UDP using a custom $40$-byte header
that carries the frame index, the tile index, and the position of the packet
within its tile. The client reassembles the tiles and returns an
acknowledgment carrying its actual pose for that timeslot. The system runs at
$60$\,FPS and a portion must be delivered within the client's $15$\,ms slot
deadline, with no retransmission attempted, consistent with the timing
constraint discussed in Section~I. Frame emission is closed-loop: the server sends the next frame on
receipt of the acknowledgment for the previous one, so that the offered load
adapts to the link instead of being fixed in advance.

The client is run as a pure network endpoint, with decoding and rendering
disabled. With the full decode pipeline active the client runs fifteen
concurrent hardware decoders and reaches severe thermal throttling within
minutes. The receive path is then delayed and slots are recorded as transmission
failures even when their packets arrived on time. The effect grows with portion
size, since a wider portion carries more tiles to decode, and therefore
introduces a bias in $\beta_i$ that increases with portion size, which is
precisely the quantity being compared. With decoding disabled, the client
remained at its idle temperature, and no thermal throttling was observed in any
of the sessions reported here.

\subsubsection{Delivery portions}
\label{trace_portion_setup}
We collect a data trace from a user viewing a free educational panoramic video (see \cite{spacepanovideo}).
The trace captures a 3 DoF (orientation only) head motion over $T = 3\times 10^3$ timeslots. At each timeslot $t$, we apply a linear regression model to the motion trace in order to predict the user's head pose for the subsequent timeslot $t+1$. The model is trained using motion trajectories from the preceding three timeslots. We set $N=4$ different portions to select from: $100\degree\times90\degree$ (minimum viewport), $102\degree\times91\degree$, $108\degree\times94\degree$, and $120\degree\times100\degree$, where each pair gives the angles corresponding to the yaw and pitch axes, respectively. For each portion $i\in [N]$, we compute $X_i(t)$ from the user's actual head pose (received in the ACK packet) and the server's predicted head pose, as described in step (4) in Section \ref{section:system_model}.

\subsubsection{Trace collection and evaluation protocol}

Both feedback signals are obtained from the same live session. The transmission
outcome $Y_{i(t)}(t)$ is reported by the client and equals $1$ only if every
packet of the transmitted portion arrived before that timeslot's deadline, and a
single lost or late packet makes it $0$. Once the actual pose arrives in the
acknowledgment, $X_i(t)$ is recomputed for \emph{every} portion $i \in [N]$
rather than only for the one that was sent. This is why the prediction outcome
constitutes full-information feedback rather than bandit feedback.

The transmission outcome can likewise be obtained for every portion, because the
portions are nested. This requires one departure from the per-timeslot behavior
described above, and it is made only while a trace is being collected. Instead
of sending the portion selected by a policy, the server sends the union of all
portions, ordering the tiles so that those of the smallest portion come first,
then those the next portion adds to it, and so on. The client reports how many
leading tiles arrived intact, in order, and in time, and a portion meets its
deadline if and only if that count covers all of its tiles. Since the leading
packets of a burst are unaffected by the packets that follow them, each prefix
is delivered as it would be if that portion were sent alone, up to any effect
of the longer burst on the timing of subsequent frames.

A recording made in this way contains $\{X_i(t)\}_{i \in [N]}$ and
$\{Y_i(t)\}_{i \in [N]}$ at every timeslot, so that all algorithms can afterwards
be evaluated offline on the identical data. In the evaluation, the optimal
portion $i^*$ in \eqref{system_model_regret} and \eqref{eq:relative_degradation}
is replaced by the empirical best fixed portion $\hat{i}$, the portion with the
largest realized cumulative reward $\sum_{t} X_i(t) Y_i(t)$ over all recorded
episodes, and both quantities are computed against its realized rewards
$Z_{\hat{i}}(t)$ within each episode; they measure performance on the recorded
outcomes rather than the asymptotic constant of Theorem~\ref{theorem:upper_bound}. 1/B-EXP3 uses the learning rate
$\eta = \sqrt{\log N / (T N)}$ of \cite[Algorithm~11.1]{lattimore2020bandit} with
$T = 3\times 10^3$.

We record $90$ consecutive passes over the head-motion trace and treat each
pass as one episode of horizon $T$. Every algorithm is
replayed on the identical episodes, so no difference between them can be
attributed to meeting a different channel. Each episode is replayed under
several random seeds and averaged before any interval is formed, so that the
reported variability is not the algorithms' own sampling. All reported
confidence intervals are two-sided $95\%$ $t$ intervals across the $90$
seed-averaged episode values. Because the episodes are recorded consecutively,
we also checked for episode-to-episode dependence: the lag-one autocorrelation
of the per-episode differences between AdaPort and each baseline is at most
$0.10$ in absolute value, and intervals computed with a dependence-robust
long-run variance estimate or a block bootstrap over episodes lead to the same
conclusions. If the
algorithms were instead run sequentially over a live link, each would experience
a different channel realization, and the observed performance gaps would be
confounded with channel variation.

\subsection{Performance}

\subsubsection{Regret}

Table~\ref{tab:regret} reports the cumulative regret over one episode of
$T=3\times10^3$ timeslots, averaged over the $90$ episodes. Without a window,
AdaPort incurs $37.2$ against $40.4$ for 1/B-TS, $49.9$ for 2/B/B-TS, $49.5$ for
1/B-EXP3, and $88.3$ for the heuristic.

The window length matters, and matters in both directions. A window of $50$
timeslots roughly halves the regret of AdaPort, while a window of $600$ leaves it
worse than using no window at all, so the parameter has to be chosen and choosing
it badly costs more than the window was worth.

\begin{table}[ht]
\centering
\caption{Cumulative regret per episode over $90$ episodes of $T=3\times10^3$ timeslots, $20$ seeds each, with a $95\%$ confidence interval across episodes.}
\label{tab:regret}

\begin{tabular}{lcc}
\toprule
algorithm & regret & $95\%$ CI \\
\midrule
AdaPort (2/F/B)             & 37.2 & [34.8, 39.6] \\
SW-AdaPort ($\tau=50$)      & \textbf{20.4} & [18.9, 22.0] \\
SW-AdaPort ($\tau=600$)     & 44.5 & [42.5, 46.5] \\
1/B-TS                      & 40.4 & [38.4, 42.4] \\
2/B/B-TS                    & 49.9 & [46.6, 53.3] \\
1/B-EXP3                    & 49.5 & [48.0, 51.1] \\
Heuristic                   & 88.3 & [85.3, 91.2] \\
\bottomrule
\end{tabular}
\end{table}

\subsubsection{Relative throughput degradation}

\begin{figure}[ht]
    \centering
    \includegraphics[width=\linewidth]{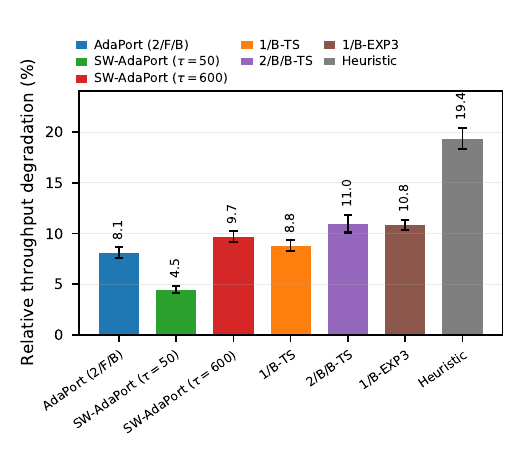}
    \caption{Relative throughput degradation compared to the empirical best fixed portion, over $90$ episodes of $T=3\times10^3$ timeslots, with a $95\%$ confidence interval across episodes. The bars ticked $\tau=50$ and $\tau=600$ are SW-AdaPort at those two window lengths. None of the four baselines uses a window.}
    \label{fig:degr}
\end{figure}

\paragraph{Comparison with 1/B-TS and 2/B/B-TS}

Figure~\ref{fig:degr} reports the relative throughput degradation
\eqref{eq:relative_degradation} for every algorithm considered, with a $95\%$
confidence interval taken across episodes. We read it in two passes. This
paragraph compares AdaPort with the learning baselines it is positioned relative
to and with the fixed heuristic, none of which uses a window.

AdaPort attains a lower degradation than 2/B/B-TS, $8.1\%$ against $11.0\%$, and
the two intervals do not overlap.
This is the comparison the 2/F/B model is principally about, because the two
algorithms differ only in whether the prediction outcome of an unplayed portion
is used. Using it is worth a reduction of $2.9$ percentage points in failed
deliveries on a real link.

The difference from 1/B-TS is in the same direction but smaller, $8.1\%$
against $8.8\%$. Since every algorithm is replayed on the identical episodes, the
comparison is paired: the per-episode difference between AdaPort and 1/B-TS has
mean $-0.7$ percentage points with a $95\%$ interval of $[-1.35, -0.10]$, which
excludes zero, and AdaPort is better on $55$ of the $90$ episodes. The margin is
small for a reason visible in the measured rates. The four prediction rates lie
between $0.889$ and $0.982$, so the full-information signal that AdaPort exploits
carries little information across portions here, and the advantage it confers is
correspondingly small. A trace on which the prediction rates were more widely
separated would be the natural setting in which to widen the margin.

All three learning algorithms stay far below the fixed heuristic, which attains
$19.4\%$ because it commits to one portion at the start of the session and never
revises that choice.

\paragraph{Comparison with 1/B-TS, 1/B-EXP3, and SW-AdaPort}
Recall that our i.i.d. assumptions about the prediction and transmission outcomes in Section~\ref{section:system_model} may not be realistic in practice. Therefore, we compare AdaPort, as well as 1/B-TS against 1/B-EXP3, which is a well-established algorithm designed for adversarial multi-armed bandit scenarios, and against SW-AdaPort, which uses the same selection rule as AdaPort but discards observations older than the window. These two baselines examine the same assumption in different ways. 1/B-EXP3 makes no stochastic assumption at all, whereas SW-AdaPort retains the stochastic model but restricts it to recent observations.

These two baselines are the second reading of Figure~\ref{fig:degr}. They put the
stationarity assumption under pressure from opposite sides, 1/B-EXP3 as its own
bar and SW-AdaPort as the two bars ticked by their window length.

AdaPort outperforms 1/B-EXP3, $8.1\%$ against $10.8\%$, and the two intervals do
not overlap. On these traces, the temporal
dependence reported next is therefore mild enough that an algorithm which
exploits stationarity still outperforms one that assumes nothing about it.

SW-AdaPort attains $4.5\%$ at $\tau = 50$ and $9.7\%$ at $\tau = 600$. Restricting
the posterior to recent observations therefore does help, so the dependence is
not merely mild enough to tolerate, but the gain is available only at a window
length that has to be chosen, and a window that is too long is worse than using
no window at all. We return to that choice in
Section~\ref{section:evaluation_window}.

\subsubsection{Temporal dependence of the feedback signals}

The testbed data allow the i.i.d.\ assumption of Section~\ref{section:system_model}
to be tested directly, and Figure~\ref{fig:autocorr} shows the result. The lag-one autocorrelation of the
transmission outcome is between $0.35$ and $0.39$ and is still between $0.11$
and $0.23$ at lag fifty, well outside the i.i.d.\ band of $\pm 0.036$.
The frame size varies slowly across the video, and the deadline turns a large
frame into a failed delivery, so runs of failures follow the content rather than
the channel. The prediction outcome has a comparable lag-one
autocorrelation but returns close to zero by lag two, which is consistent with
the short-term inertia of head motion.

The independence of the two signals from each other behaves differently, and it
is worth separating the two mechanisms that could couple them. The first runs
through the content. A head movement that causes a prediction error also moves
the viewport onto a different part of the frame, where the payload differs and a
larger frame is more likely to miss the deadline. This testbed exercises that
mechanism in full, and Table~\ref{tab:crossdep} measures it by conditioning the
transmission outcome of a portion on the prediction outcome of the same portion.
The $\phi$ coefficient, which is the correlation between two binary variables,
is at most $0.020$ in absolute value. For comparison, each signal correlates with
its own past at $0.19$ to $0.48$ at lag one, so the dependence between the
signals along this path is an order of magnitude weaker than the dependence
within either of them.

The second mechanism runs through the radio, where a head movement changes the
antenna orientation and the blockage between the client and the access point.
The client is stationary on a desk throughout a session and the viewport is
prescribed by the trace, so this mechanism is absent from the measurement by
construction. The result above should accordingly be read as evidence of weak same-arm
association under the content-mediated coupling, rather than as a verification
of the independence of $\mathbf{Y}(t)$ from the entire vector $\mathbf{X}(t)$
assumed in Section~\ref{section:system_model}.

\begin{table}[ht]
\centering
\caption{Transmission outcome conditioned on the prediction outcome of the same portion, over $90$ episodes.}
\label{tab:crossdep}

\begin{tabular}{cccc}
\toprule
portion $i$ & $\Pr(Y_i{=}1 \mid X_i{=}1)$ & $\Pr(Y_i{=}1 \mid X_i{=}0)$ & $\phi$ \\
\midrule
1 & 0.9177 & 0.9347 & $-0.0196$ \\
2 & 0.9096 & 0.9069 & $+0.0024$ \\
3 & 0.8690 & 0.8711 & $-0.0010$ \\
4 & 0.7695 & 0.7407 & $+0.0092$ \\
\bottomrule
\end{tabular}
\end{table}

\begin{figure}[ht]
    \centering
    \includegraphics[width=\linewidth]{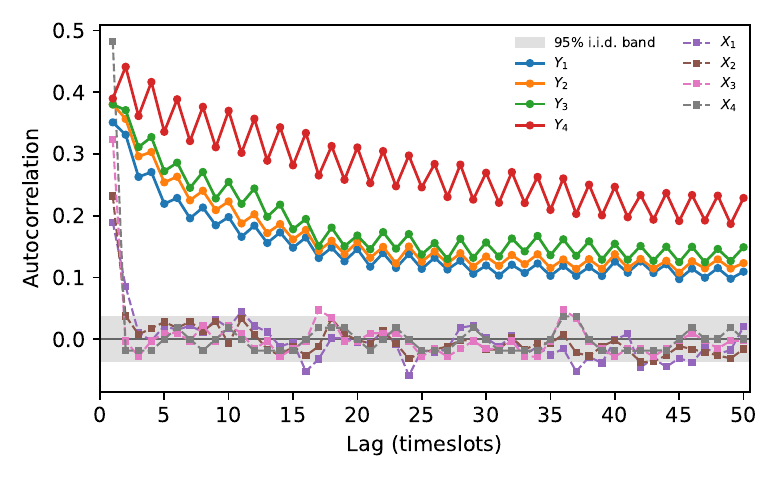}
    \caption{Autocorrelation of the transmission outcome $Y_i$, drawn solid, and of the prediction outcome $X_i$, drawn dashed, computed within each episode and averaged over the $90$ episodes. The shaded region is the pointwise $95\%$ reference band $\pm 1.96/\sqrt{T}$ for a single i.i.d.\ episode of length $T$.}
    \label{fig:autocorr}
\end{figure}

\subsubsection{The sliding-window variant and its window length}
\label{section:evaluation_window}

SW-AdaPort attains the lowest degradation of any algorithm considered when its
window is well chosen, $4.5\%$ at $\tau = 50$ against $8.1\%$ for AdaPort.

The benefit, however, is bought with a parameter that AdaPort does not have.
Figure~\ref{fig:window} shows the regret per episode against the window length.
The curve is U-shaped and crosses the unwindowed algorithm twice, so the window
helps only inside a bounded range and hurts outside it. Outside that range the
variant is worse than AdaPort,
$58.0$ at $\tau = 5$ and $44.5$ at $\tau = 600$ against $37.2$. Both failure modes
have a clear cause. A window that is too long
discards early observations without being short enough to follow the channel,
so it loses on both counts. A window that is too short leaves too few
observations per portion for the posteriors to be informative, since $\tau$
timeslots divide across $N$ portions, and the selection becomes more variable.

The rise at the right-hand end does not continue indefinitely. The window is
measured in timeslots and an episode lasts $T = 3\times10^3$ of them, so a window
of $\tau \geq T$ discards nothing within an episode and SW-AdaPort reduces
exactly to AdaPort. The curve therefore turns back towards the reference line
beyond the range plotted and meets it at $\tau = T$. We stop the sweep at
$\tau = 1200$ because the behaviour that matters, the minimum and the two
crossings, lies well below that.

A plausible explanation is that the window that works is matched to the
correlation length of the transmission process, which Figure~\ref{fig:autocorr}
places beyond fifty timeslots; we have not tested this directly. That length is a property of the deployment and is not known before
the system is run, and it will change with the environment. We therefore regard
the windowed variant as a useful diagnostic of how much the stationarity
assumption costs, and as a practical option where the correlation length can be
estimated online, rather than as a replacement for AdaPort.

\begin{figure}[ht]
    \centering
    \includegraphics[width=\linewidth]{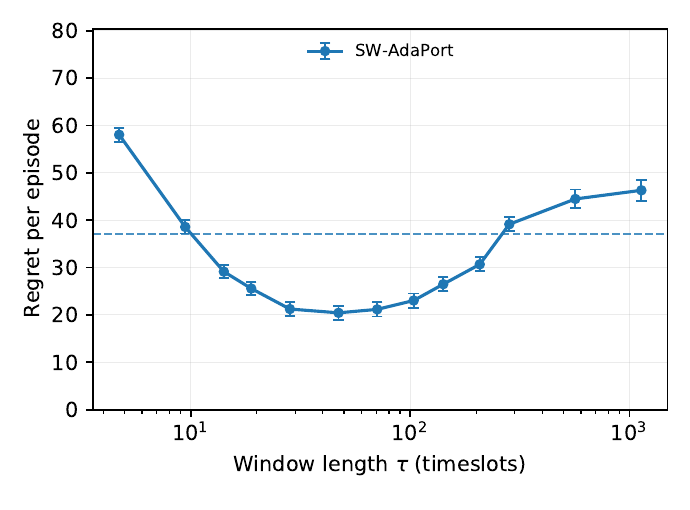}
    \caption{Regret per episode of SW-AdaPort against the window length $\tau$, with AdaPort shown as a dashed horizontal reference. Error bars are $95\%$ intervals across episodes. The variant is worse than AdaPort at both ends of the range.}
    \label{fig:window}
\end{figure}

\section{Conclusion and future work}
\label{section:conclusion}

In this paper, we investigated a hybrid feedback-driven online learning framework tailored to the interactive delivery of panoramic scenes over wireless networks. A critical observation is that, upon receiving the user's current head movement data, the system can determine whether each candidate portion successfully covers the user's viewport. This implies that the prediction feedback can be categorized as full-information feedback.
We first establish a theoretical improvement in the regret lower bound for 2/F/B, as compared to 2/B/B and 1/B feedback. This highlights the potential benefit of leveraging full-information predictions to enhance regret performance.
Building on this, we propose AdaPort, a hybrid learning algorithm that leverages both the
full-information feedback and bandit feedback and demonstrate that the proposed algorithm is asymptotically optimal.
Measurements on an end-to-end testbed show that AdaPort outperforms the state-of-the-art learning-based baselines and the heuristic minimum scene delivery scheme. In future work, we plan to investigate more realistic reward models that better capture the nuances of user experience. For example, we aim to refine the definition of delivery success by accounting for cases in which only a small portion of a segment fails to be delivered. Such partial failures may have minimal impact on the user’s viewing experience and thus should not necessarily be treated as complete delivery failures.  As noted in Section~\ref{section:system_model}, declaring a transmission successful whenever a prescribed fraction of its packets is delivered keeps $Y_i(t)$ binary, so AdaPort and its regret analysis continue to hold with $\beta_i$ redefined accordingly. A non-binary reward quantifying the degree of coverage is a more substantial extension, as it departs from the Bernoulli model. We also intend to explore alternative problem formulations, such as nonstationary bandits, to more accurately reflect the dynamics of real-world panoramic scene delivery.

\section*{Use of Generative AI Disclosure}

We acknowledge the use of ChatGPT 5.6~Pro and GPT-6 Astra in producing this revision of the original manuscript. In particular, they were used to find and fix gaps in the mathematical arguments in the paper. The authors also used Anthropic's Claude (Opus~5) to assist in debugging the end-to-end testbed described in
Section~\ref{section:empirical_evaluation}. All content was thoroughly reviewed by the authors.

%
%

\bibliographystyle{IEEEtran}
\bibliography{refs}

\appendices

\section{Proof of Theorem~\ref{theorem:lower_bound}}
\label{appendix:lower_bound}

To prove Theorem~\ref{theorem:lower_bound}, we require the following divergence decomposition lemma, which can be thought of as a version of Lemma 15.1 in \cite{lattimore2020bandit} for 2/F/B feedback. 

\begin{lemma}
\label{lemma:divergence_decomposition}
Consider an alternative problem instance with prediction distribution $p'_{\mathbf{X}}$ and transmission rates $\boldsymbol{\beta}'$, and the corresponding probability measure $\Pr'$ under this instance. Let $(\pi)$ be an arbitrary randomized policy that learns from 2/F/B feedback and let $p_\pi$ and $p_\pi'$ denote the probability mass functions of the sequence of observations $(i(t), \mathbf{X}(t), Y_{i(t)}(t))_{t=1}^T$  when the policy $(\pi)$ is used under the probability measures $\Pr$ and $\Pr'$ respectively. Then
\begin{equation}
\begin{aligned}
\KL{p_\pi}{p'_\pi} 
&= T \KL{p_\mathbf{X}}{p'_\mathbf{X}} \\
&\phantom{\,=\,}+ \sum_{i=1}^N \KL{p_{Y_i}}{p'_{Y_i}} \sum_{t=1}^T \Pr(i(t)=i)
\end{aligned}
\end{equation}
where $p_\mathbf{X}(\mathbf{x}) \triangleq \Pr(\mathbf{X}(1) = \mathbf{x})$, $p'_\mathbf{X}(\mathbf{x}) \triangleq \probprime(\mathbf{X}(1) = \mathbf{x})$, $p_{Y_i}(y_i) \triangleq \Pr(Y_i(1) = y_i)$, and $p'_{Y_i}(y_i) \triangleq \probprime(Y_i(1) = y_i)$.  
\end{lemma}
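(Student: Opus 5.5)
The plan is to factor the joint law of the observation sequence with the chain rule and then apply the chain rule for KL divergence, as in Lemma~15.1 of \cite{lattimore2020bandit}, taking care of the extra full-information component. Write the history up to time $t$ as $H_t \triangleq (i(s), \mathbf{X}(s), Y_{i(s)}(s))_{s=1}^{t}$. Under either instance, the policy chooses $i(t)$ from $H_{t-1}$ together with its internal randomization, and this kernel $\pi_t(\cdot\mid H_{t-1})$ is identical under $\Pr$ and $\Pr'$. Given $H_{t-1}$ and $i(t)=i$, the pair $(\mathbf{X}(t), Y_i(t))$ is a fresh draw. The reason is that the pairs $(\mathbf{X}(t),\mathbf{Y}(t))$ are i.i.d.\ across timeslots and independent of the algorithm's randomization, so they are independent of $(H_{t-1}, i(t))$. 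Within a slot, $\mathbf{Y}(t)$ is independent of $\mathbf{X}(t)$. The conditional law is therefore the product $p_{\mathbf{X}}(\mathbf{x})\,p_{Y_i}(y)$, or $p'_{\mathbf{X}}(\mathbf{x})\,p'_{Y_i}(y)$ under the alternative instance. This gives
\[
p_\pi(H_T)=\prod_{t=1}^T \pi_t(i(t)\mid H_{t-1})\, p_{\mathbf{X}}(\mathbf{X}(t))\, p_{Y_{i(t)}}(Y_{i(t)}(t)),
\]
and the analogous expression for $p'_\pi$.

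Next I take the log-likelihood ratio. The policy factors cancel. This requires $p'_\pi \ll p_\pi$ on the relevant support; otherwise both sides are $+\infty$ under the paper's conventions, and I would handle that case separately. The ratio becomes
\[
\log\frac{p_\pi}{p'_\pi}=\sum_{t=1}^T\log\frac{p_{\mathbf{X}}(\mathbf{X}(t))}{p'_{\mathbf{X}}(\mathbf{X}(t))}+\sum_{t=1}^T\sum_{i=1}^N \id\{i(t)=i\}\log\frac{p_{Y_i}(Y_i(t))}{p'_{Y_i}(Y_i(t))}.
\]
I then take the expectation under $\Pr$.

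\textbf{First sum.} Each $\mathbf{X}(t)$ has marginal law $p_{\mathbf{X}}$, so each term contributes $\KL{p_\mathbf{X}}{p'_\mathbf{X}}$. Summing over $t$ gives $T\KL{p_\mathbf{X}}{p'_\mathbf{X}}$.

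\textbf{Second sum.} I condition on $(H_{t-1}, i(t))$ and use the tower property. Because $Y_i(t)$ is independent of $(H_{t-1}, i(t))$ with law $p_{Y_i}$,
\[
\bE\bigl[\id\{i(t)=i\}\log\tfrac{p_{Y_i}(Y_i(t))}{p'_{Y_i}(Y_i(t))}\bigr]=\Pr(i(t)=i)\,\KL{p_{Y_i}}{p'_{Y_i}}.
\]
Adding the two sums yields the claimed identity.

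The main obstacle is the justification of this conditional-independence step. One must check carefully that $Y_i(t)$ is independent of the event $\{i(t)=i\}$ and of $H_{t-1}$. This is where the model assumptions enter, and they are what distinguish this from a pure bandit argument. $\mathbf{X}(t)$ is revealed only after $i(t)$ is chosen, so it does not influence the choice. Within a slot, $\mathbf{Y}(t)$ is independent of $\mathbf{X}(t)$, which is what lets the per-step likelihood split into the two factors above. The component dependence inside $\mathbf{X}(t)$ is harmless, because the vector's joint law $p_{\mathbf{X}}$ is used as a single factor.

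A minor technical point is the conventions when some KL term is infinite. If a KL term is infinite and the corresponding event has positive probability, both sides equal $\infty$. If the event has zero probability, $0\cdot\infty=0$ is consistent on both sides. I would dispose of this by a short case remark.
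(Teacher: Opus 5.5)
Your argument is correct and is essentially the paper's proof. You factor the observation law by the chain rule, cancel the environment-independent policy kernel, use that the fresh $(\mathbf{X}(t),Y_i(t))$ is independent of $(H_{t-1},i(t))$ with law $p_{\mathbf{X}}\,p_{Y_i}$, and split the expected log-ratio into the $\mathbf{X}$ and $Y$ parts. The paper does the last step by explicit summation using $\Pr(\mathbf{W}_t=(i,\mathbf{x},y_i))=\Pr(i(t)=i)\,p_{\mathbf{X}}(\mathbf{x})\,p_{Y_i}(y_i)$ rather than the tower property.

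One small slip: finiteness of $\KL{p_\pi}{p'_\pi}$ requires $p_\pi \ll p'_\pi$, not $p'_\pi \ll p_\pi$. Your separate remark on infinite terms, which the paper omits, covers that case anyway.
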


\begin{proof}
For convenience, let $\mathbf{W}_t \triangleq (i(t), \mathbf{X}(t), Y_{i(t)}(t))$ denote the vector of observations from timeslot $t$. Fix an arbitrary randomized policy $(\pi)$ specified by a probability mass function $\pi_t( \,\cdot\, \mid \mathbf{w}_{1:t-1} ) : [N] \to [0,1]$ for each timeslot $t$ and possible trajectory of prior observations $\mathbf{w}_{1:t-1} \in \mathrm{support}(\mathbf{W}_{1:t-1})$. The policy therefore plays the arm $i(t) \sim \pi_t( \,\cdot\, \mid \mathbf{W}_{1:t-1} )$ in each timeslot $t$. 

By the chain rule of probability, we have 
\begin{equation}
\label{eq:chain_rule_1}
\Pr(\mathbf{W}_{1:T} = \mathbf{w}_{1:T}) = \prod_{t=1}^T \Pr(\mathbf{W}_t = \mathbf{w}_t \mid \mathbf{W}_{1:t-1} = \mathbf{w}_{1:t-1})
\end{equation}
(taking the event $\{ \mathbf{W}_{1:t-1} = \mathbf{w}_{1:t-1} \}$ when $t=1$ to be equal to the sample space for notational convenience). Similarly,
\begin{equation}
\label{eq:chain_rule_2}
\probprime(\mathbf{W}_{1:T} = \mathbf{w}_{1:T}) = \prod_{t=1}^T \probprime(\mathbf{W}_t = \mathbf{w}_t \mid \mathbf{W}_{1:t-1} = \mathbf{w}_{1:t-1}).
\end{equation}
For each timeslot $t$, define the function 
\begin{equation}
f_t(\mathbf{w}_{1:t}) \triangleq \log \frac{\Pr(\mathbf{W}_t = \mathbf{w}_t \mid \mathbf{W}_{1:t-1} = \mathbf{w}_{1:t-1})}{\probprime(\mathbf{W}_t = \mathbf{w}_t \mid \mathbf{W}_{1:t-1} = \mathbf{w}_{1:t-1})}.
\end{equation}
Then it follows from \eqref{eq:chain_rule_1} and \eqref{eq:chain_rule_2} that
\begin{equation}
\begin{aligned}
\log\frac{\Pr(\mathbf{W}_{1:T} = \mathbf{w}_{1:T})}{\probprime(\mathbf{W}_{1:T} = \mathbf{w}_{1:T})}
= \sum_{t=1}^T f_t(\mathbf{w}_{1:t}).
\end{aligned}
\end{equation}
Then we have
\begin{equation}
\label{eq:kl_initial_decomp}
\begin{aligned}
\KL{p_\pi}{p'_\pi}
&= \sum_{\mathbf{w}_{1:T}} \Pr(\mathbf{W}_{1:T} = \mathbf{w}_{1:T}) \\
&\qquad\qquad\qquad\qquad \cdot \log\frac{\Pr(\mathbf{W}_{1:T} = \mathbf{w}_{1:T})}{\probprime(\mathbf{W}_{1:T} = \mathbf{w}_{1:T})} \\
&= \sum_{\mathbf{w}_{1:T}} \Pr(\mathbf{W}_{1:T} = \mathbf{w}_{1:T}) \sum_{t=1}^T f_t(\mathbf{w}_{1:t}) \\
&\overset{(a)}{=} \bE\mleft[ \sum_{t=1}^T f_t(\mathbf{W}_{1:t}) \mright] = \sum_{t=1}^T \bE\mleft[  f_t(\mathbf{W}_{1:t}) \mright] \\
&= \sum_{t=1}^T \sum_{\mathbf{w}_{1:t}} \prob(\mathbf{W}_{1:t} = \mathbf{w}_{1:t}) \\
&\qquad\qquad\qquad\qquad \cdot f_t(\mathbf{w}_{1:t}) \\
\end{aligned}
\end{equation}
where $(a)$ is by the law of the unconscious statistician. Fix a timeslot $t$ and a sample path $\mathbf{w}_{1:t}$ such that $\mathbf{w}_t = (i,\mathbf{x},y_i)$. 
Then
\begin{equation}
\label{eq:f_t_fixed_decomp}
\begin{aligned}
&f_t(\mathbf{w}_{1:t}) 
= \log \frac{\Pr(\mathbf{W}_t = (i,\mathbf{x},y_i) \mid \mathbf{W}_{1:t-1} = \mathbf{w}_{1:t-1})}{\probprime(\mathbf{W}_t = (i,\mathbf{x},y_i) \mid \mathbf{W}_{1:t-1} = \mathbf{w}_{1:t-1})} \\
&= \log \frac{\Pr(i(t) = i, \mathbf{X}(t) = \mathbf{x}, Y_i(t) = y_i \mid \mathbf{W}_{1:t-1} = \mathbf{w}_{1:t-1})}{\probprime(i(t) = i, \mathbf{X}(t) = \mathbf{x}, Y_i(t) = y_i \mid \mathbf{W}_{1:t-1} = \mathbf{w}_{1:t-1})} \\
&\overset{(a)}{=} \log \frac{\Pr(i(t) = i \mid \mathbf{W}_{1:t-1} = \mathbf{w}_{1:t-1}) \,p_\mathbf{X}(\mathbf{x})\,p_{Y_i}(y_i)}{\probprime(i(t) = i \mid \mathbf{W}_{1:t-1} = \mathbf{w}_{1:t-1})\,p'_\mathbf{X}(\mathbf{x})\,p'_{Y_i}(y_i)} \\
&\overset{(b)}{=} \log \frac{\pi_t(i \mid \mathbf{w}_{1:t-1}) \,p_\mathbf{X}(\mathbf{x})\,p_{Y_i}(y_i)}{\pi_t(i \mid \mathbf{w}_{1:t-1})\,p'_\mathbf{X}(\mathbf{x})\,p'_{Y_i}(y_i)} = \log \frac{p_\mathbf{X}(\mathbf{x})\,p_{Y_i}(y_i)}{p'_\mathbf{X}(\mathbf{x})\,p'_{Y_i}(y_i)} \\
\end{aligned}
\end{equation}
where $(a)$ is because $\mathbf{X}(t)$ and $Y_{i}(t)$ are independent and both i.i.d. over time, and therefore are independent from the history $\mathbf{W}_{1:t-1}$ and decision $i(t)$, and $(b)$ is because only the parameters $(\boldsymbol{\alpha},\boldsymbol{\beta})$ change to $(\boldsymbol{\alpha}',\boldsymbol{\beta}')$ between the probability measures $\Pr$ and $\probprime$ and therefore the additional randomness introduced by the randomized policy is distributed the same under both probability measures. It follows from \eqref{eq:kl_initial_decomp} that 
\begin{equation}
\label{eq:kl_second_decomp}
\begin{aligned}
&\KL{p_\pi}{p'_\pi} = \sum_{t=1}^T \sum_{\mathbf{w}_{1:t}} \prob(\mathbf{W}_{1:t} = \mathbf{w}_{1:t}) f_t(\mathbf{w}_{1:t}) \\
&= \sum_{t=1}^T \sum_{(i,\mathbf{x},y_i)}\,\sum_{\mathbf{w}_{1:t} \,:\, \mathbf{w}_t = (i,\mathbf{x},y_i)} \prob(\mathbf{W}_{1:t} = \mathbf{w}_{1:t}) f_t(\mathbf{w}_{1:t}) \\
&\overset{(a)}{=} \sum_{t=1}^T \sum_{(i,\mathbf{x},y_i)}\,\sum_{\mathbf{w}_{1:t} \,:\, \mathbf{w}_t = (i,\mathbf{x},y_i)} \prob(\mathbf{W}_{1:t} = \mathbf{w}_{1:t}) \\
&\qquad\qquad\qquad\qquad \cdot \log \frac{p_\mathbf{X}(\mathbf{x})\,p_{Y_i}(y_i)}{p'_\mathbf{X}(\mathbf{x})\,p'_{Y_i}(y_i)} \\
&= \sum_{t=1}^T \sum_{(i,\mathbf{x},y_i)}\log \frac{p_\mathbf{X}(\mathbf{x})\,p_{Y_i}(y_i)}{p'_\mathbf{X}(\mathbf{x})\,p'_{Y_i}(y_i)} \\
&\qquad\qquad\qquad\qquad \cdot \sum_{\mathbf{w}_{1:t} \,:\, \mathbf{w}_t = (i,\mathbf{x},y_i)} \prob(\mathbf{W}_{1:t} = \mathbf{w}_{1:t})  \\
&\overset{(b)}{=} \sum_{t=1}^T \sum_{(i,\mathbf{x},y_i)}\log \frac{p_\mathbf{X}(\mathbf{x})\,p_{Y_i}(y_i)}{p'_\mathbf{X}(\mathbf{x})\,p'_{Y_i}(y_i)} \\
&\qquad\qquad\qquad\qquad \cdot \prob(\mathbf{W}_t = (i,\mathbf{x},y_i))  \\
\end{aligned}
\end{equation}
where $(a)$ is from \eqref{eq:f_t_fixed_decomp} and $(b)$ applies the law of total probability. Since $i(t)$, $\mathbf{X}(t)$, and $Y_i(t)$ are independent, for each timeslot $t$ and observation vector $(i,\mathbf{x},y_i)$, we have
\begin{equation}
\begin{aligned}
\prob(\mathbf{W}_t = (i,\mathbf{x},y_i)) 
&= \prob(i(t) = i, \mathbf{X}(t) = \mathbf{x}, Y_i(t) = y_i) \\
&= \prob(i(t) = i) \,p_\mathbf{X}(\mathbf{x}) \,p_{Y_i}(y_i). \\
\end{aligned} 
\end{equation}
Plugging the above into \eqref{eq:kl_second_decomp} gives
\begin{equation}
\begin{aligned}
&\KL{p_\pi}{p'_\pi} \\
&= \sum_{t=1}^T \sum_{(i,\mathbf{x},y_i)}\log \frac{p_\mathbf{X}(\mathbf{x})\,p_{Y_i}(y_i)}{p'_\mathbf{X}(\mathbf{x})\,p'_{Y_i}(y_i)} \prob(i(t) = i) \,p_\mathbf{X}(\mathbf{x}) \,p_{Y_i}(y_i) \\
&= \sum_{t=1}^T \underbrace{\sum_{i=1}^N \prob(i(t) = i)}_{=\,1} \underbrace{\sum_{\mathbf{x}} p_\mathbf{X}(\mathbf{x}) \log \frac{p_\mathbf{X}(\mathbf{x})}{p'_\mathbf{X}(\mathbf{x})}}_{\KL{p_\mathbf{X}}{p'_\mathbf{X}}} \underbrace{\sum_{y_i} p_{Y_i}(y_i)}_{=\,1}  \\
&\phantom{\,=\,}+ \sum_{t=1}^T \sum_{i=1}^N \prob(i(t) = i) \underbrace{\sum_{\mathbf{x}} p_\mathbf{X}(\mathbf{x}) }_{\,=1} \underbrace{\sum_{y_i} p_{Y_i}(y_i) \log \frac{p_{Y_i}(y_i)}{p'_{Y_i}(y_i)}}_{\KL{p_{Y_i}}{p'_{Y_i}}} \\
&= T \KL{p_\mathbf{X}}{p'_\mathbf{X}} + \sum_{i=1}^N \KL{p_{Y_i}}{p'_{Y_i}} \sum_{t=1}^T \Pr(i(t)=i).
\end{aligned}
\end{equation}

\end{proof}

We now prove Theorem~\ref{theorem:lower_bound}.
Fix an arbitrary randomized policy $(\pi)$ that is consistent under 2/F/B feedback in the sense of Theorem~\ref{theorem:lower_bound}. Fix an arm $j \in \mathcal{I}$, i.e., $j \neq i^*$ with $\alpha_j > \alpha_{i^*}\beta_{i^*}$, and fix $\lambda \in {(\Delta_j, \alpha_j(1-\beta_j))}$. We construct a new problem instance in which the distribution of $\mathbf{X}(t)$ is unchanged (so that $\boldsymbol{\alpha}' = \boldsymbol{\alpha}$) and the transmission rates are $\boldsymbol{\beta}'$ with $\beta_j' = \beta_j + \lambda / \alpha_j$, and $\beta_i' = \beta_i$ for all $i \neq j$.
The interval for $\lambda$ is nonempty since $\Delta_j<\alpha_j\left(1-\beta_j\right)$ when $\alpha_j>\alpha_{i^*} \beta_{i^*}$, and every such $\lambda$ gives $\beta_j' \in (\beta_j, 1)$.

Observe that $\alpha_j^{\prime} \beta_j^{\prime}=\alpha_j \beta_j+\lambda>\alpha_{i^*} \beta_{i^*}$ and therefore $j \in \operatorname{argmax}_{i \in[N]} \alpha_i^{\prime} \beta_i^{\prime}$ is an optimal arm under the bandit setting with parameters $\boldsymbol{\alpha^{\prime}}, \boldsymbol{\beta^{\prime}}$ and the reward gaps $\Delta_i^{\prime} \triangleq$ $\alpha_j^{\prime} \beta_j^{\prime}-\alpha_i \beta_i$ satisfy
\begin{equation}
\Delta_i^{\prime}=\alpha_j \beta_j+\lambda-\alpha_i \beta_i=\lambda-\left(\alpha_i \beta_i-\alpha_j \beta_j\right) \geq \lambda-\Delta_j, \quad \forall i \neq j.    
\end{equation}
Let $\probprime$ denote the probability measure and $\bE^{\prime}$ denote the expectation operator in the setting with parameters $(\boldsymbol{\alpha}',\boldsymbol{\beta}')$, and let $R^{\prime}(T) \triangleq T \alpha_j^{\prime} \beta_j^{\prime}-\sum_{t=1}^T \bE^{\prime}\left[Z_{i(t)}(t)\right]$ denote the cumulative regret in this setting. Then
\begin{equation}
\begin{aligned}
\label{eqn:two_regrets}
R^{\prime}(T) & =\sum_{i \neq j} \Delta_i^{\prime} \bE^{\prime}\left[n_i(T)\right] \geq\left(\lambda-\Delta_j\right)\left(T-\bE^{\prime}\left[n_j(T)\right]\right), \\
R(T) & =\sum_{i \neq i^*} \Delta_i \bE\left[n_i(T)\right] \geq \Delta_j \bE\left[n_j(T)\right]
\end{aligned}    
\end{equation}
Define the event $\mathcal{E} \triangleq\left\{n_j(T)<T / 2\right\}$, we have 
\begin{equation}
\begin{aligned}
\label{eqn:regret1_lower}
& T-\bE^{\prime}\left[n_j(T)\right] \\
& =\bE^{\prime}\left[T-n_j(T)\middle | \mathcal{E} \right]\probprime(\mathcal{E})+\bE^{\prime}\left[T-n_j(T)\middle | \mathcal{E}^{\mathrm{c}} \right]\probprime(\mathcal{E}^{\mathrm{c}}) \\
& \geq \bE^{\prime}\left[T-n_j(T)\middle | \mathcal{E} \right]\probprime(\mathcal{E})\geq   \frac{1}{2} T \probprime(\mathcal{E}), \\
\end{aligned}    
\end{equation}
and 
\begin{equation}
\begin{aligned}
\label{eqn:regret2_lower}
& \bE\left[n_j(T)\right]  =\bE\left[n_j(T)\middle | \mathcal{E} \right]\prob(\mathcal{E})+\bE\left[n_j(T)\middle | \mathcal{E}^{\mathrm{c}} \right]\prob(\mathcal{E}^{\mathrm{c}}) \\
& \geq \bE\left[n_j(T)\middle | \mathcal{E}^{\mathrm{c}}\right]\prob(\mathcal{E}^{\mathrm{c}})\geq   \frac{1}{2} T \prob(\mathcal{E}^{\mathrm{c}}).
\end{aligned}    
\end{equation}
Inserting \eqref{eqn:regret1_lower} and \eqref{eqn:regret2_lower} into \eqref{eqn:two_regrets} gives
$R^{\prime}(T) \geq \frac{1}{2} \left(\lambda-\Delta_j\right)  T \probprime(\mathcal{E})$ and $R(T) \geq \frac{1}{2} \Delta_j  T \Pr\left(\mathcal{E}^{\mathrm{c}}\right)$. Therefore
\begin{equation}
\label{eq:sum_of_regrets}
 R^{\prime}(T)+R(T) \geq \frac{1}{2} T \min \left\{\left(\lambda-\Delta_j\right), \Delta_j\right\}\left[\Pr\left(\mathcal{E}^{\mathrm{c}}\right)+\probprime(\mathcal{E})\right].   
\end{equation}

Using the same notation as Lemma~\ref{lemma:divergence_decomposition}, let $p_\pi$ and $p_\pi'$ denote the probability mass functions of the sequence of observations $(i(t), \mathbf{X}(t), Y_{i(t)}(t))_{t=1}^T$  when the policy $(\pi)$ is used under the probability measures $\Pr$ and $\Pr'$ respectively. Note that $n_j(T)$ is a function of $(i(t), \mathbf{X}(t), Y_{i(t)}(t))_{t=1}^T$. Then we can write $\Pr(\mathcal{E}) = \sum_{\omega \in \mathcal{E}} p_\pi(\omega)$ (mutatis mutandis for $\probprime(\mathcal{E})$) and therefore
\begin{equation}
\begin{aligned}
\Pr(\mathcal{E})-\probprime(\mathcal{E}) 
&= \sum_{\omega \in \mathcal{E}} p_\pi(\omega) - \sum_{\omega \in \mathcal{E}} p'_\pi(\omega) \\
&\leq \sup_{\text{event }A} \left| \sum_{\omega \in A} p_\pi(\omega) - \sum_{\omega \in A} p'_\pi(\omega) \right| \\
&= D_{\mathrm{TV}}\left(p_\pi, p'_\pi\right)
\overset{(a)}{\leq} 1-\frac{1}{2} e^{-\KL{p_\pi}{p'_\pi}}
\end{aligned}
\end{equation}
where  $D_{\mathrm{TV}}$ denotes the total variation distance and $(a)$ applies the Bretagnolle-Huber inequality.
Rearranging and using the fact that $\Pr\left(\mathcal{E}^{\mathrm{c}}\right) = 1- \Pr\left(\mathcal{E}\right)$ gives
\begin{equation}
\label{eq:sum_of_probabilities}
\begin{aligned}
\Pr\left(\mathcal{E}^{\mathrm{c}}\right)+\probprime(\mathcal{E}) 
&\geq \frac{1}{2} e^{-\KL{p_\pi}{p'_\pi}} \\
&\stackrel{(a)}{=} \frac{1}{2} e^{-\KL{p_{Y_j}}{p'_{Y_j}} \bE\left[n_j(T)\right]},   
\end{aligned}
\end{equation}
where $(a)$ is from Lemma \ref{lemma:divergence_decomposition} (note that $\KL{p_\mathbf{X}}{p'_\mathbf{X}}=0$ since $p_\mathbf{X} = p'_\mathbf{X}$ by construction and each $\KL{p_{Y_i}}{p'_{Y_i}}=0$ since $\beta_i=\beta_i^{\prime}$ for $i \neq j$). Substituting \eqref{eq:sum_of_probabilities} into \eqref{eq:sum_of_regrets} and rearranging gives
\begin{equation}
e^{\KL{p_{Y_j}}{p'_{Y_j}} \bE\left[n_j(T)\right]} \geq \frac{\frac{1}{4} T \min \left\{\left(\lambda-\Delta_j\right), \Delta_j\right\}}{R^{\prime}(T)+R(T)} .    
\end{equation}
Taking the logarithm of both sides, dividing by $\log T$ and rearranging gives
\begin{equation}
\begin{aligned}
\frac{\bE\left[n_j(T)\right]}{\log T} & \geq \frac{1}{\KL{p_{Y_j}}{p'_{Y_j}}} +\frac{\log \left(\frac{1}{4} \min \left\{\left(\lambda-\Delta_j\right), \Delta_j\right\}\right)}{\KL{p_{Y_j}}{p'_{Y_j}} \log T} \\
& -\frac{\log \left(R^{\prime}(T)+R(T)\right)}{\KL{p_{Y_j}}{p'_{Y_j}} \log T} .    
\end{aligned}
\end{equation}

Taking the limit inferior as $T \rightarrow \infty$ of both sides, we need only inspect the last term on the right-hand side. By consistency on both instances, $R(T) = o(T^\delta)$ and $R'(T) = o(T^\delta)$ for every $\delta > 0$. Then $\forall \delta>0$, there exists a constant $C_\delta>0$ such that $R^{\prime}(T)+R(T) \leq C_\delta T^\delta$ and therefore
\begin{equation}
\begin{aligned}
\limsup _{T \rightarrow \infty} \frac{\log \left(R^{\prime}(T)+R(T)\right)}{\log T} &\leq \limsup _{T \rightarrow \infty} \frac{\log \left(C_\delta\right)+\delta \log T}{\log T} \\
&=\delta.
\end{aligned}
\end{equation}

Since $\delta$ can be taken arbitrarily small and positive, it follows that
\begin{equation}
\limsup _{T \rightarrow \infty} \frac{\log \left(R^{\prime}(T)+R(T)\right)}{\log T}\leq0.
\end{equation}
and so $\liminf_{T \rightarrow \infty} \frac{\bE\left[n_j(T)\right]}{\log T} \geq \frac{1}{\KL{p_{Y_j}}{p'_{Y_j}}}$, where $\KL{p_{Y_j}}{p'_{Y_j}} = d\mleft(\beta_j, \beta_j + \lambda/\alpha_j\mright)$.  Since this bound holds for every $\lambda \in\left(\Delta_j, \alpha_j\left(1-\beta_j\right)\right)$, letting $\lambda \downarrow \Delta_j$ gives $\beta_j' \downarrow \frac{\alpha_{i^*}\beta_{i^*}}{\alpha_j}$, and continuity of $d(\beta_j,\cdot)$ yields the result.

\section{Proof of Lemma~\ref{lemma:upper_bound}}
\label{appendix:upper_bound}

Before proceeding to the proof, we first introduce a key definition and a supporting lemma (cf. Lemma~\ref{lemma:first_term_lemma}) to facilitate the analysis.
\begin{definition}
Define $p_{i^*, t}$ as the probability that the Thompson sample of arm $i^*$ is greater than the threshold $x \triangleq \frac{\alpha_{i^*}\beta_{i^*}-\epsilon_3}{\alpha_{i^*}-\epsilon_2}$ of Lemma~\ref{lemma:upper_bound} given the history until $t-1$, i.e.,
$p_{i^*, t}\triangleq\Pr\big(\theta_{\beta,i^*}(t)>x \,|\,\mathcal{F}_{t-1} \big) $, where $\mathcal{F}_{t} \triangleq \{i(\tau), X_{1}(\tau),\ldots, X_{N}(\tau), Y_{i(\tau)}(\tau),\tau=1,\ldots, t\}$ represents the history up to time $t$. Since $\theta_{\beta,i^*}(t)$ has a Beta distribution given $\mathcal{F}_{t-1}$ and $x<1$, we have $p_{i^*,t}>0$ on every history.
\end{definition}
For brevity, we also use 
$\mathcal{F}_{t}$ to denote the $\sigma$-algebra generated by this history.

\begin{lemma}
\label{lemma:first_term_lemma}
For all $t, i \neq i^*$, we have 
\begin{equation}
\begin{aligned}
\label{eq:lemma:first_term_lemma}
 & \operatorname{Pr}\left(i(t)=i, E_i^\mu(t), E_i^\theta(t), G_{i^*}^\mu(t) \mid \mathcal{F}_{t-1}\right)   \\
 & \leq \frac{\left(1-p_{i^*, t}\right)}{p_{i^*, t}} \operatorname{Pr}\left(i(t)=i^*, E_i^\mu(t), E_i^\theta(t), G_{i^*}^\mu(t) \mid \mathcal{F}_{t-1}\right).
\end{aligned}
\end{equation}
\end{lemma}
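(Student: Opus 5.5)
This is the analogue of Lemma~1 of Agrawal and Goyal for Thompson Sampling; the plan is to condition on the history and isolate the only source of randomness that distinguishes the two events, namely the Thompson sample $\theta_{\beta,i^*}(t)$. Given $\mathcal{F}_{t-1}$, the quantities $\overline{\alpha}_j(t)$, $S_{\beta,j}(t)$, $F_{\beta,j}(t)$, $\overline{\beta}_i(t)$ are all determined. Hence the events $E_i^\mu(t)$ and $G_{i^*}^\mu(t)$ are $\mathcal{F}_{t-1}$-measurable. If either fails on the realized history, both sides of \eqref{eq:lemma:first_term_lemma} are zero and there is nothing to prove. So I will assume both hold and reduce to showing
\[
\Pr\bigl(i(t)=i, E_i^\theta(t)\mid\mathcal{F}_{t-1}\bigr)\le \frac{1-p_{i^*,t}}{p_{i^*,t}}\Pr\bigl(i(t)=i^*, E_i^\theta(t)\mid\mathcal{F}_{t-1}\bigr).
\]

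The key device is the auxiliary quantity $M_i(t)\triangleq\max_{j\neq i^*}\overline{\alpha}_j(t)\theta_{\beta,j}(t)$, which depends only on the samples $(\theta_{\beta,j}(t))_{j\neq i^*}$. Conditionally on $\mathcal{F}_{t-1}$, these samples are independent of $\theta_{\beta,i^*}(t)$, because the samples are drawn independently across portions.

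\emph{First bound.} On $\{i(t)=i\}\cap E_i^\theta(t)$ we have $M_i(t)=\overline{\alpha}_i(t)\theta_{\beta,i}(t)\le \alpha_{i^*}\beta_{i^*}-\epsilon_3$. We also must have $\overline{\alpha}_{i^*}(t)\theta_{\beta,i^*}(t)\le M_i(t)$, since otherwise $i$ would not be the argmax. On $G_{i^*}^\mu(t)$ we have $\overline{\alpha}_{i^*}(t)\ge\alpha_{i^*}-\epsilon_2>0$, so $\theta_{\beta,i^*}(t)>x$ would force $\overline{\alpha}_{i^*}\theta_{\beta,i^*}>(\alpha_{i^*}-\epsilon_2)x=\alpha_{i^*}\beta_{i^*}-\epsilon_3\ge M_i(t)$. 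Hence $\theta_{\beta,i^*}(t)\le x$ on this event. This gives
\[
\Pr(i(t)=i,E_i^\theta(t)\mid\mathcal{F}_{t-1})\le\Pr(\theta_{\beta,i^*}(t)\le x,\ M_i(t)\le\alpha_{i^*}\beta_{i^*}-\epsilon_3\mid\mathcal{F}_{t-1}),
\]
and by conditional independence the right-hand side factors as $(1-p_{i^*,t})\Pr(M_i(t)\le\alpha_{i^*}\beta_{i^*}-\epsilon_3\mid\mathcal{F}_{t-1})$. Here I use $E_i^\theta(t)\subseteq$ the event on $\theta_{\beta,i}$ alone.

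\emph{Second bound.} If $\theta_{\beta,i^*}(t)>x$ and $M_i(t)\le\alpha_{i^*}\beta_{i^*}-\epsilon_3$, then $i^*$ strictly beats every other arm, so $i(t)=i^*$ regardless of tie-breaking. Moreover $E_i^\theta(t)$ holds, since $\overline{\alpha}_i\theta_{\beta,i}\le M_i(t)$. Therefore
\[
\Pr(i(t)=i^*,E_i^\theta(t)\mid\mathcal{F}_{t-1})\ge p_{i^*,t}\Pr(M_i(t)\le\alpha_{i^*}\beta_{i^*}-\epsilon_3\mid\mathcal{F}_{t-1}).
\]
Dividing, using $p_{i^*,t}>0$ from the definition, and combining with the first bound yields the claim. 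Multiplying back by the $\mathcal{F}_{t-1}$-measurable indicators of $E_i^\mu(t)$ and $G_{i^*}^\mu(t)$ restores the full statement.

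\emph{Main obstacle.} The delicate point is the strict-versus-weak inequality bookkeeping with the random, history-dependent coefficient $\overline{\alpha}_{i^*}(t)$. It enters through the threshold $x$, which must satisfy $(\alpha_{i^*}-\epsilon_2)x=\alpha_{i^*}\beta_{i^*}-\epsilon_3$. I also need $x<1$, which follows from the second inequality in \eqref{eqn:epsilon_range}, and $\overline{\alpha}_{i^*}(t)>0$, which follows from $\epsilon_2<\epsilon_3/\beta_{i^*}<\alpha_{i^*}$. I must also make sure that $E_i^\theta(t)$ does not involve $\theta_{\beta,i^*}$, so that the factorization is legitimate.
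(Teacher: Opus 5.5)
Your proposal is correct and is essentially the paper's proof. It uses the same reduction to a fixed history on which $E_i^\mu(t)$ and $G_{i^*}^\mu(t)$ hold, the same event $B$ (your $\{M_i(t)\le\alpha_{i^*}\beta_{i^*}-\epsilon_3\}$), and the same upper and lower bounds from conditional independence of the Thompson samples. The only difference is cosmetic: you work with $\{\theta_{\beta,i^*}(t)>x\}$ directly in both bounds, whereas the paper passes through $p'_t=\Pr\bigl(\overline{\alpha}_{i^*}(t)\theta_{\beta,i^*}(t)>\alpha_{i^*}\beta_{i^*}-\epsilon_3\mid\mathcal{F}_{t-1}\bigr)$ and then uses $p'_t\ge p_{i^*,t}$.
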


\begin{proof}
Fix $t$ and a history $F_{t-1}$ of $\mathcal{F}_{t-1}$. The events $E_i^\mu(t)$ and $G_{i^*}^\mu(t)$ are $\mathcal{F}_{t-1}$-measurable. If either fails on $F_{t-1}$, both sides of \eqref{eq:lemma:first_term_lemma} are zero, since $p_{i^*,t}>0$. Otherwise, $\ol{\alpha}_{i^*}(t) \geq \alpha_{i^*}-\epsilon_2$ on $F_{t-1}$, so $\{\theta_{\beta,i^*}(t) > x\} \subseteq \{\ol{\alpha}_{i^*}(t)\theta_{\beta,i^*}(t) > \alpha_{i^*}\beta_{i^*}-\epsilon_3\}$ and hence
\begin{equation}
\label{eq:pprime}
p'_t \triangleq \Pr\big(\ol{\alpha}_{i^*}(t)\theta_{\beta,i^*}(t) > \alpha_{i^*}\beta_{i^*}-\epsilon_3 \mid \mathcal{F}_{t-1}\big) \geq p_{i^*,t}.
\end{equation}
Let $B \triangleq \{\ol{\alpha}_j(t)\theta_{\beta,j}(t) \leq \alpha_{i^*}\beta_{i^*}-\epsilon_3 \text{ for all } j \neq i^*\}$. If $i(t)=i$ and $E_i^\theta(t)$ holds, then by the selection rule \eqref{alg:policy} every $\ol{\alpha}_j(t)\theta_{\beta,j}(t)$ is at most $\ol{\alpha}_i(t)\theta_{\beta,i}(t) \leq \alpha_{i^*}\beta_{i^*}-\epsilon_3$, so $B$ holds and $\ol{\alpha}_{i^*}(t)\theta_{\beta,i^*}(t) \leq \alpha_{i^*}\beta_{i^*}-\epsilon_3$. Since the Thompson samples are conditionally independent across arms given $\mathcal{F}_{t-1}$,
\begin{equation}
\label{eq:lemma3_upper}
\Pr\big(i(t)=i, E_i^\theta(t) \mid \mathcal{F}_{t-1}\big) \leq (1-p'_t)\Pr(B \mid \mathcal{F}_{t-1}).
\end{equation}
Conversely, on $B \cap \{\ol{\alpha}_{i^*}(t)\theta_{\beta,i^*}(t) > \alpha_{i^*}\beta_{i^*}-\epsilon_3\}$, arm $i^*$ is the unique maximizer in \eqref{alg:policy} and $E_i^\theta(t)$ holds, so
\begin{equation}
\label{eq:lemma3_lower}
\Pr\big(i(t)=i^*, E_i^\theta(t) \mid \mathcal{F}_{t-1}\big) \geq p'_t \Pr(B \mid \mathcal{F}_{t-1}).
\end{equation}
Since $E_i^\mu(t)$ and $G_{i^*}^\mu(t)$ hold on $F_{t-1}$, they can be added to the events in \eqref{eq:lemma3_upper} and \eqref{eq:lemma3_lower} without changing either probability. Combining the two displays with $(1-p'_t)/p'_t \leq (1-p_{i^*,t})/p_{i^*,t}$, which follows from \eqref{eq:pprime}, proves \eqref{eq:lemma:first_term_lemma}.
\end{proof}

The decomposition \eqref{eqn:initial_decomposition} and the concentration arguments for its last three terms follow the Thompson Sampling analysis of \cite{agrawal2017near}.

We now proceed with the proof of Lemma~\ref{lemma:upper_bound}.
We first prove \eqref{decompose_first_term}. Applying the law of total expectation followed by Lemma~\ref{lemma:first_term_lemma} to the LHS of \eqref{decompose_first_term} gives
\allowdisplaybreaks
\begin{align}
\label{eq:first_term_upperbound_1}
& \sum_{t=1}^T \operatorname{Pr}\left(i(t)=i, E_i^\mu(t), E_i^\theta(t), G_{i^*}^\mu(t)\right) \nonumber\\
& =\sum_{t=1}^T \mathbb{E}\left[\operatorname{Pr}\left(i(t)=i, E_i^\mu(t), E_i^\theta(t), G_{i^*}^\mu(t)\mid \mathcal{F}_{t-1}\right)\right] \nonumber \\
& \leq \sum_{t=1}^T \mathbb{E}\left[\frac{\left(1-p_{i^*, t}\right)}{p_{i^*, t}} \operatorname{Pr}\left(
  \substack{
    i(t)=i^* \\
    E_i^\mu(t) \\
    E_i^\theta(t) \\
    G_{i^*}^\mu(t)
  }
  \,\middle|\,
  \mathcal{F}_{t-1}
\right)\right] \nonumber\\
& \stackrel{(a)}{=}\sum_{t=1}^T \mathbb{E}\left[\mathbb{E}\left[\left.\frac{\left(1-p_{i^*, t}\right)}{p_{i^*, t}} \id\left(
  \substack{
    i(t)=i^* \\
    E_i^\theta(t) \\
    E_i^\mu(t) \\
    G_{i^*}^\mu(t)
  }
\right)
 \right\rvert\, \mathcal{F}_{t-1}\right]\right] \nonumber \\
& \stackrel{(b)}{=}\sum_{t=1}^T \mathbb{E}\left[\frac{\left(1-p_{i^*, t}\right)}{p_{i^*, t}} \id\left(i(t)=i^*, E_i^\theta(t), E_i^\mu(t),  G_{i^*}^\mu(t)\right)\right] \nonumber\\
& = \sum_{t=1}^T \mathbb{E}\left[\left(\frac{1}{p_{i^*, t}}-1 \right) \id\left(i(t)=i^*, E_i^\theta(t), E_i^\mu(t),  G_{i^*}^\mu(t)\right)\right] \nonumber \\
&\leq  \bE\Bigg[\sum_{t=1}^T \left( \frac{1}{p_{i^*, t}} -1 \right) \id\mleft(i(t)=i^* \mright)\Bigg] ,
\end{align}
where step $(a)$ is because $p_{i^*, t}$ is $\mathcal{F}_{t-1}$-measurable; $(b)$ follows from the law of total expectation.

Since $\theta_{\beta,i}(t)$ is drawn from $\Beta(S_{\beta,i}(t)+1, F_{\beta,i}(t)+1)$ given $\mathcal{F}_{t-1}$, the following holds almost surely for any arm $i \in [N]$:
\begin{equation}
\begin{aligned}
\label{eq:first_term_upperbound_2}
& \Pr\big(\theta_{\beta,i}(t)> x  \mid \mathcal{F}_{t-1} \big) \\
& = \Pr\big(\theta_{\beta,i}(t)> x\mid \{i(\tau),Y_{i(\tau)}(\tau),\tau=1,\ldots, t-1 \} \big) \\
& = \Pr\big(\theta_{\beta,i}(t)> x\mid S_{\beta,i}(t),  F_{\beta,i}(t) \big).
\end{aligned}
\end{equation}

To bound the right-hand side of \eqref{eq:first_term_upperbound_1}, we use the stack-of-rewards construction \cite[Section~4.6]{lattimore2020bandit}. For each arm $i$, let $U_{i,1}, U_{i,2}, \ldots$ be i.i.d.\ $\text{Bernoulli}(\beta_i)$ random variables, independent of $\{\mathbf{X}(t)\}_{t \geq 1}$ and of the randomness used to draw the Thompson samples, and let the $s$-th play of arm $i$ receive the transmission outcome $Y_i(\tau_{s,i}) = U_{i,s}$. This construction leaves the joint distribution of $(i(t), \mathbf{X}(t), Y_{i(t)}(t))_{t \geq 1}$ unchanged. Let $S_{i,s} \triangleq \sum_{u=1}^{s} U_{i,u}$, so that for every deterministic $s$, $S_{i,s}$ has the binomial distribution with $s$ trials and success probability $\beta_i$, and along the trajectory $S_{\beta,i}(t) = S_{i,n_i(t-1)}$. We also recall the relationship between the Beta and binomial CDFs,
\begin{equation}
\label{eq:beta_binomial}
F^{\text{Beta}}_{a,b}(y) = 1 - F^B_{a+b-1,y}(a-1) \quad \text{for all positive integers } a, b.
\end{equation}
For $s \geq 0$, define
\begin{equation}
\begin{aligned}
\label{eq:q_s_g_s}
q_s &\triangleq 1 - F^{\text{Beta}}_{S_{i^*,s}+1,\, s - S_{i^*,s}+1}(x) = F^{B}_{s+1,x}(S_{i^*,s}), \\
g_s &\triangleq \frac{1}{q_s} - 1,
\end{aligned}
\end{equation}
where the equality uses \eqref{eq:beta_binomial}. Since, given $\mathcal{F}_{t-1}$, $\theta_{\beta,i^*}(t) \sim \Beta(S_{\beta,i^*}(t)+1, F_{\beta,i^*}(t)+1)$ with $S_{\beta,i^*}(t) + F_{\beta,i^*}(t) = n_{i^*}(t-1)$, we have $p_{i^*,t} = q_{n_{i^*}(t-1)}$. Since $n_{i^*}(t-1)$ takes each value $s \in \{0, \ldots, n_{i^*}(T)-1\}$ exactly once over the timeslots $t \leq T$ with $i(t) = i^*$, we have, on every sample path,
\begin{equation}
\begin{aligned}
\label{eq:first_term_upperbound_3}
&\sum_{t=1}^T \left( \frac{1}{p_{i^*,t}}-1 \right)\id\mleft(i(t)=i^* \mright)
= \sum_{s=0}^{n_{i^*}(T)-1} g_s \leq \sum_{s=0}^{\infty} g_s.
\end{aligned}
\end{equation}
Taking expectations and applying Tonelli's theorem, the right-hand side of \eqref{eq:first_term_upperbound_1} is at most $\sum_{s=0}^{\infty} \bE[g_s]$. The following lemma shows that this series is finite.

\begin{lemma}
\label{lem:posterior_sum}
Let $0 < x < p \leq 1$, let $S_s$ be the sum of $s$ i.i.d.\ $\text{Bernoulli}(p)$ random variables, and let $q_s \triangleq F^B_{s+1,x}(S_s)$ and $g_s \triangleq q_s^{-1} - 1$. Then $q_s > 0$ for every $s$ and every realization of $S_s$, and
\begin{equation}
\label{eq:C_post}
C_{\mathrm{post}}(p,x) \triangleq \sum_{s=0}^{\infty} \bE[g_s] < \infty.
\end{equation}
\end{lemma}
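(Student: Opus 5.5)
This is the standard Agrawal–Goyal bound on $\sum_s \bE[1/q_s - 1]$ for Thompson sampling with Beta priors (their Lemma 2.9, used there with $x$ between the optimal mean and the threshold). I would reproduce that argument with $p$ in place of the optimal mean.

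\textbf{Positivity.} First, $q_s = F^B_{s+1,x}(S_s) \geq F^B_{s+1,x}(0) = (1-x)^{s+1} > 0$, because $x<1$. This also gives the trivial per-term bound $\bE[g_s] \leq (1-x)^{-(s+1)}$. That bound is finite for each $s$ but grows exponentially, so it cannot be summed directly.

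\textbf{Exact expectation.} For each fixed $s$, write
\[
\bE[g_s] = \sum_{k=0}^{s} \binom{s}{k} p^k (1-p)^{s-k}\Bigl(\frac{1}{F^B_{s+1,x}(k)} - 1\Bigr).
\]

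\textbf{Case $p=1$.} Then $S_s = s$ almost surely and $q_s = F^B_{s+1,x}(s) = 1 - x^{s+1}$. Hence $g_s = x^{s+1}/(1-x^{s+1})$, which is summable since $x<1$.

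\textbf{Case $p<1$.} Set $\Delta' = p - x > 0$ and split the sum over $k$ into three ranges, bounding each term as follows.

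\begin{enumerate}
\item \emph{Large $k$, namely $k \geq \lfloor y s\rfloor$ with $y \in (x,p)$ fixed, say $y=(x+p)/2$.} Here $F^B_{s+1,x}(k) \geq 1 - e^{-2(s+1)(y-x)^2}$ up to rounding, by Hoeffding (more precisely Chernoff–Hoeffding applied to a Binomial$(s+1,x)$ exceeding $k$). So $1/q_s - 1 \leq e^{-c s}$ for some constant $c>0$, and this range contributes a geometrically summable amount.
\item \emph{Moderate $k$, namely $k < ys$.} Here $S_s$ itself is atypically small. Bound $1/F^B_{s+1,x}(k) \leq 1/\binom{s+1}{k} x^k(1-x)^{s+1-k}$, using only the single term $j=k$ of the CDF. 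Combined with the binomial weight, the $k$-th summand is at most
\[
\frac{\binom{s}{k}}{\binom{s+1}{k}}\cdot\frac{p^k(1-p)^{s-k}}{x^k(1-x)^{s+1-k}}
\leq \frac{1}{1-x}\Bigl(\frac{p}{x}\Bigr)^k\Bigl(\frac{1-p}{1-x}\Bigr)^{s-k}.
\]
Summing over $k<ys$ gives roughly $e^{-s\, d(y,p)}$ times the quantity $e^{s\,d(y,x)}$ at the exponent level. These do not cancel automatically, so Agrawal–Goyal instead use the sharper bound $F^B_{s+1,x}(k) \geq$ the largest term. The trick is that for $k \leq x(s+1)$ the CDF is at least a constant times $1$, namely $F^B_{s+1,x}(\lfloor x(s+1)\rfloor) \geq 1/2$ approximately (median of the binomial). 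Thus for $xs \leq k < ys$ the quantity $1/q_s$ is at most about $2$, while $\Pr(S_s < ys) \leq e^{-2s(p-y)^2}$.
\item \emph{Small $k$, namely $k < xs$.} Here I use the single-term bound above. The ratio $(p/x)^k\bigl((1-p)/(1-x)\bigr)^{s-k}$ equals $\exp\bigl(-s\,[\text{linear function of } k/s]\bigr)$. At $k/s = x$ this exponent equals $-s\,d(x,p)$, which is strictly negative. Because the function is monotone in $k$ (increasing, since $p/x > 1 > (1-p)/(1-x)$), it is maximized at $k\approx xs$. Summing at most $s$ such terms gives $s\,e^{-s\,d(x,p)}/(1-x)$, which is summable.
\end{enumerate}

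\textbf{Conclusion.} Adding the three ranges gives $\bE[g_s] \leq C e^{-c' s}$ plus polynomial factors, hence $C_{\mathrm{post}}(p,x) < \infty$.

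\textbf{Main obstacle.} The delicate step is the small and moderate $k$ regime: making sure the monotonicity and single-term bound give an exponent exactly $-s\,d(x,p)$, and handling the boundary near $k \approx xs$ through the median bound. If that bookkeeping gets messy, I would simply cite Agrawal and Goyal \cite[Lemma~2.9]{agrawal2017near}, which states exactly $\sum_s \bE[1/q_s - 1] < \infty$ for Beta posteriors with $x<p$. The case $p=1$ is treated separately above, as done there.
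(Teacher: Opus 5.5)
Your argument is correct in substance, but it organizes the $p<1$ case differently from the paper. You fix the split at $y=(x+p)/2$. The single-term lower bound on $F^B_{s+1,x}(k)$ then bounds the $k$-th summand by $\frac{1}{1-x}e^{s\ell(k/s)}$, where $\ell(u)=u\log\frac{p}{x}+(1-u)\log\frac{1-p}{1-x}=d(u,x)-d(u,p)$. This $\ell$ need not be negative on all of $[x,y)$, which is why you need the extra range $xs\le k<ys$, handled by a median bound on $F^B_{s+1,x}(k)$ times $\Pr(S_s<ys)\le e^{-2s(p-y)^2}$.

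The paper avoids that range altogether. Since $\ell$ is affine and increasing with $\ell(x)=-d(x,p)<0$, it picks $r\in(x,p)$ with $\ell(r)<0$. The single-term bound alone then covers $S_s\le sr$, giving $\frac{s+1}{1-x}e^{-cs}$ with $c=-\ell(r)$, and Chernoff--Hoeffding for $\mathrm{Bin}(s+1,x)$ covers $S_s>sr$. Equivalently, choosing your $y$ close enough to $x$ that $\ell(y)<0$ makes your range 2 disappear. The paper's route is shorter and uses nothing beyond Chernoff--Hoeffding. Yours keeps the rates explicit in $p-x$ and $d(x,p)$, closer to the Agrawal--Goyal bookkeeping, at the cost of an anti-concentration fact.

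That fact is slightly off as you wrote it. The claim $F^B_{s+1,x}(\lfloor (s+1)x\rfloor)\ge\tfrac12$ fails in general: for $s=3$, $x=0.66$ one gets $F^B_{4,x}(2)\approx0.42$ even though $2\ge xs$. Also, ``for $k\le x(s+1)$'' should read $k\ge\lfloor x(s+1)\rfloor$.

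What you actually need, and what holds, is a uniform positive lower bound for large $s$:
\begin{enumerate}
\item every integer $k\ge xs$ satisfies $k\ge\lfloor (s+1)x\rfloor$;
\item a median of $\mathrm{Bin}(s+1,x)$ is at most $\lceil (s+1)x\rceil$, so $F^B_{s+1,x}(\lfloor (s+1)x\rfloor)\ge\tfrac12-\max_j f^B_{s+1,x}(j)\to\tfrac12$.
\end{enumerate}
Hence $1/q_s\le 4$ on your middle range for all large $s$. The finitely many small $s$ are harmless, because each $\bE[g_s]$ is finite by your positivity bound. With this repair your proof is complete.
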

\begin{proof}
Let $f^B_{s,a}(k) \triangleq \binom{s}{k} a^k (1-a)^{s-k}$ denote the probability mass function of the binomial distribution with $s$ trials and success probability $a$. Since $q_s \geq F^B_{s+1,x}(0) = (1-x)^{s+1} > 0$, each $\bE[g_s]$ is finite, so it suffices to show that $\bE[g_s]$ is summable over all sufficiently large $s$.

First suppose $p < 1$. For $u \in [0,1]$, define the increasing affine function $\ell(u) \triangleq u \log\frac{p}{x} + (1-u) \log\frac{1-p}{1-x}$. Since $\ell(x) = -d(x,p) < 0$, there exists $r \in (x,p)$ with $\ell(r) < 0$. Let $c \triangleq -\ell(r) > 0$ and choose $r_0 \in (x, r)$. We split $\bE[g_s]$ according to whether $S_s \leq sr$.

For every integer $k \in \{0, \ldots, s\}$, the event of exactly $k$ successes in the first $s$ trials and a failure in the last trial is contained in the event of at most $k$ successes in $s+1$ trials, so $F^B_{s+1,x}(k) \geq (1-x) f^B_{s,x}(k)$. For $s \geq 1$ and $k \leq sr$, $f^B_{s,p}(k)/f^B_{s,x}(k) = e^{s\ell(k/s)} \leq e^{-cs}$. Consequently,
\begin{equation}
\label{eq:odds_low}
\bE\left[g_s \id(S_s \leq sr)\right] \leq \sum_{k=0}^{\lfloor sr \rfloor} \frac{f^B_{s,p}(k)}{F^B_{s+1,x}(k)} \leq \frac{s+1}{1-x} e^{-cs}.
\end{equation}

For all sufficiently large $s$, $sr \geq (s+1) r_0$. On the event $S_s > sr$, monotonicity of the CDF and the Chernoff--Hoeffding bound give $1 - q_s = 1 - F^B_{s+1,x}(S_s) \leq 1 - F^B_{s+1,x}((s+1)r_0) \leq e^{-(s+1) d(r_0, x)}$, which is at most $1/2$ for all sufficiently large $s$. Since $g_s = (1-q_s)/q_s \leq 2(1-q_s)$ when $q_s \geq 1/2$,
\begin{equation}
\label{eq:odds_high}
\bE\left[g_s \id(S_s > sr)\right] \leq 2 e^{-(s+1) d(r_0,x)}
\end{equation}
for all sufficiently large $s$. The right-hand sides of \eqref{eq:odds_low} and \eqref{eq:odds_high} are summable over $s$, which proves \eqref{eq:C_post} when $p<1$.

If $p = 1$, then $S_s = s$ almost surely, $q_s = F^B_{s+1,x}(s) = 1 - x^{s+1}$, and $g_s = \frac{x^{s+1}}{1-x^{s+1}} \leq \frac{x^{s+1}}{1-x}$. Therefore $C_{\mathrm{post}}(1,x) \leq \frac{x}{(1-x)^2} < \infty$.
\end{proof}

Applying Lemma~\ref{lem:posterior_sum} with $p = \beta_{i^*}$, $S_s = S_{i^*,s}$, and the threshold $x$, which satisfies $0 < x < \beta_{i^*}$ by the choice of $\epsilon_3$ and the second inequality in \eqref{eqn:epsilon_range}, and inserting into \eqref{eq:first_term_upperbound_1} gives
\begin{equation}
\label{eq:upper_bound_first_term}
\sum_{t=1}^T \Pr\left(i(t)=i, E_i^\mu(t), E_i^\theta(t), G_{i^*}^\mu(t)\right) \leq C_{\mathrm{post}}(\beta_{i^*}, x) = M_1.
\end{equation}

Next, we prove \eqref{decompose_second_term}. 
\paragraph{Case 1: $\alpha_i \geq \alpha_{i^*}\beta_{i^*}$}
For arm $i\neq i^*$ satisfying $\alpha_i\geq\alpha_{i^*}\beta_{i^*}$, we can have
\begin{align}
\label{prop:second_term}
& \sum_{t=1}^T \Pr\left(i(t)=i, \overline{E_i^\theta(t)}, E_i^\mu(t)\right) \nonumber \\
& = \sum_{t=1}^T \Pr\left(i(t)=i, n_i(t-1) \leq L_i(T), \overline{E_i^\theta(t)}, E_i^\mu(t)\right) \nonumber \\
& +\sum_{t=1}^T \operatorname{Pr}\left(i(t)=i, n_i(t-1)>L_i(T), \ol{E_i^\theta(t)}, E_i^\mu(t)\right) , 
\end{align}
where $L_i(T)=\frac{\log T}{d(\beta_i+\epsilon_1,\frac{\alpha_{i^*}\beta_{i^*}-\epsilon_3}{\alpha_i+\epsilon_1})}$. 

The first term in \eqref{prop:second_term} can be upper bounded as 
\begin{align}
\label{eq:upper_bound_second_term_not_enough_time}
& \sum_{t=1}^T \Pr\left(i(t)=i, n_i(t-1) \leq L_i(T), \overline{E_i^\theta(t)}, E_i^\mu(t)\right) \nonumber \\
& =\sum_{t=1}^T \bE\left[\id\left(i(t)=i, n_i(t-1) \leq L_i(T), \overline{E_i^\theta(t)}, E_i^\mu(t)\right) \right] \nonumber \\  
& = \bE\left[\sum_{s=1}^{n_i(T)}\id\left( n_i(\tau_{s,i}-1) \leq L_i(T), \overline{E_i^\theta(\tau_{s,i})}, E_i^\mu(\tau_{s,i})\right) \right] \nonumber \\  
& = \bE\left[\sum_{s=0}^{n_i(T)-1}\id\left( s \leq L_i(T), \overline{E_i^\theta(\tau_{s+1,i})}, E_i^\mu(\tau_{s+1,i})\right) \right] \nonumber \\  
& \leq L_i(T) +1.
\end{align}

Then we focus on bounding the second term in \eqref{prop:second_term}.
\begin{equation}
\resizebox{\linewidth}{!}{$
\begin{aligned}
\label{eq:upper_bound_second_term}
& \sum_{t=1}^T \operatorname{Pr}\left(i(t)=i, n_i(t-1)>L_i(T), \overline{E_i^\theta(t)}, E_i^\mu(t)\right) \\
& =\sum_{t=1}^T \mathbb{E}\left[\id\left(i(t)=i, n_i(t-1)>L_i(T), \overline{E_i^\theta(t)}, E_i^\mu(t)\right)\right] \\
& \stackrel{(a)}{=}\mathbb{E}\left[\sum_{t=1}^T \mathbb{E}\left[\id\left(i(t)=i, n_i(t-1)>L_i(T), \overline{E_i^\theta(t)}, E_i^\mu(t)\right) \mid \mathcal{F}_{t-1}\right]\right] \\
& \stackrel{(b)}{=}\mathbb{E}\left[\sum_{t=1}^T \id\left(n_i(t-1)>L_i(T), E_i^\mu(t) \right) \operatorname{Pr}\left(i(t)=i, \overline{E_i^\theta(t)} \mid \mathcal{F}_{t-1}\right)\right]  \\
& \leq \mathbb{E}\Bigg[\sum_{t=1}^T \id\left(n_i(t-1)>L_i(T), |\ol{\alpha}_i(t)-\alpha_i| \leq \epsilon_1, |\ol{\beta}_i(t)-\beta_i| \leq \epsilon_1 \right) \\ 
& \cdot\Pr\left(\ol{\alpha}_i(t)\theta_{\beta,i}(t) > \alpha_{i^*}\beta_{i^*}-\epsilon_3 \mid \mathcal{F}_{t-1}\right)\Bigg],
\end{aligned}
$}
\end{equation}
where step $(a)$ uses the law of total expectation and $(b)$ uses the fact that $\id(n_i(t-1)>L_i(T),E^{\mu}_i(t))$ is $\mathcal{F}_{t-1}$-measurable.

By definition, $S_{\beta,i}(t)= {\ol{\beta}_i(t)n_i(t-1)}$ and $F_{\beta,i}(t)= (1-\ol{\beta}_i(t))n_i(t-1)$, and therefore, $\theta_{\beta,i}(t)$ is a $\operatorname{Beta}\left(\ol{\beta}_i(t)n_i(t-1)+1,\left(1-\ol{\beta}_i(t)\right)n_i(t-1)+1\right)$ distributed random variable with integer parameters. Combing with \eqref{eq:first_term_upperbound_2} and \eqref{eq:beta_binomial} gives
\begin{equation}
\begin{aligned}
\label{eq:stochastic_dominance}
& \Pr\big(\theta_{\beta,i}(t)> y  \mid \mathcal{F}_{t-1} \big) \\
& = 1-F_{\ol{\beta}_i(t) n_i(t-1)+1,\left(1-\ol{\beta}_i(t) \right)n_i(t-1)+1}^{\text{Beta}}\left(y\right) \\
& = F_{ n_i(t-1)+1,y}^{B}\left(\ol{\beta}_i(t) n_i(t-1)\right).
\end{aligned}
\end{equation}

It suffices to bound the conditional probability in \eqref{eq:upper_bound_second_term} on histories $\mathcal{F}_{t-1} = F_{t-1}$ for which $n_i(t-1) > L_i(T)$ and $E_i^\mu(t)$ hold, since the indicator multiplying it vanishes on all other histories. Fix such a history; then
\begin{align}
\label{eqn:product_beta_edge1}
& \Pr\left(\ol{\alpha}_i(t)\theta_{\beta,i}(t) > \alpha_{i^*}\beta_{i^*}-\epsilon_3 \mid \mathcal{F}_{t-1}=F_{t-1}\right) \nonumber \\
& \leq \Pr\left(\theta_{\beta,i}(t)\geq \frac{\alpha_{i^*}\beta_{i^*}-\epsilon_3}{\alpha_i+\epsilon_1} \mid \mathcal{F}_{t-1}=F_{t-1} \right) \nonumber\\
& \stackrel{(a)}{\leq} F_{n_i(t-1)+1, \frac{\alpha_{i^*}\beta_{i^*}-\epsilon_3}{\alpha_i+\epsilon_1}}^{\mathrm{B}}((\beta_i+\epsilon_1)n_i(t-1)) \nonumber\\
& \stackrel{(b)}{\leq} F_{n_i(t-1)+1, \frac{\alpha_{i^*}\beta_{i^*}-\epsilon_3}{\alpha_i+\epsilon_1}}^{\mathrm{B}}((\beta_i+\epsilon_1)(n_i(t-1)+1)) \nonumber\\
& \stackrel{(c)}{\leq} e^{-\left(n_i(t-1)+1\right) d\left(\beta_i+\epsilon_1, \frac{\alpha_{i^*}\beta_{i^*}-\epsilon_3}{\alpha_i+\epsilon_1}\right)} 
 \stackrel{(d)}{\leq} \frac{1}{T},
\end{align}
where step $(a)$ uses \eqref{eq:stochastic_dominance} and $\overline{\beta}_i(t) n_i(t-1) \leq (\beta_i + \epsilon_1) n_i(t-1)$ under the sample path $\mathcal{F}_{t-1} = F_{t-1}$, together with the fact that the CDF of any random variable is non-decreasing; $(b)$ follows from the fact that CDF of any random variable is non-decreasing; $(c)$ uses the Chernoff-Hoffeding Bound; $(d)$ from $n_i(t-1)+1 > L_i(T)$.

Inserting \eqref{eqn:product_beta_edge1} into \eqref{eq:upper_bound_second_term} gives
\begin{align}
\label{eq:upper_bound_second_term_enogh_time}
\sum_{t=1}^T \operatorname{Pr}\left(i(t)=i, n_i(t-1)>L_i(T), \overline{E_i^\theta(t)}, E_i^\mu(t)\right) \leq 1 
\end{align}

After substituting \eqref{eq:upper_bound_second_term_not_enough_time} and \eqref{eq:upper_bound_second_term_enogh_time} into \eqref{prop:second_term}, for arm $i\neq i^*$ satisfying $\alpha_i\geq \alpha_{i^*}\beta_{i^*}$, we obtain
\begin{equation}
\label{eq:upper_bound_second_term_case1}
\sum_{t=1}^T \Pr\left(i(t)=i, \overline{E_i^\theta(t)}, E_i^\mu(t)\right) \leq \frac{\log T}{d(\beta_i+\epsilon_1,\frac{\alpha_{i^*}\beta_{i^*}-\epsilon_3}{\alpha_i+\epsilon_1})} +2.
\end{equation}

\paragraph{Case 2: $\alpha_i < \alpha_{i^*}\beta_{i^*}$}

For arm $i\neq i^*$ satisfying $\alpha_i<\alpha_{i^*}\beta_{i^*}$, we can choose any $\epsilon_4 \in (0, \alpha_{i^*}\beta_{i^*}-\alpha_{i}-\epsilon_3)$ such that
\begin{align}
\label{eq:alpha_is_small}
& \sum_{t=1}^T \Pr\left(i(t)=i, \overline{E_i^\theta(t)}, E_i^\mu(t)\right) \nonumber \\
& = \sum_{t=1}^T \Pr\left(i(t)=i, \ol{\alpha}_i(t)\theta_{\beta,i}(t)>\alpha_{i^*}\beta_{i^*}-\epsilon_3, E_i^\mu(t)\right) \nonumber \\
& \leq \sum_{t=1}^T \Pr\left( \ol{\alpha}_i(t)\geq \alpha_{i}+\epsilon_4 \right) \nonumber \\
& \stackrel{(a)} {\leq} 1+\sum_{t=1}^{\infty} \exp \left(-t d\left(\alpha_{i}+\epsilon_4, \alpha_{i}\right)\right)  \stackrel{(b)}{\leq} 1+\frac{1}{2\epsilon_4^2},
\end{align}
where step $(a)$ uses the Chernoff-Hoeffding bound; $(b)$ uses the Pinsker inequality.

Therefore, for arm $i\neq i^*$ satisfying $\alpha_i<\alpha_{i^*}\beta_{i^*}$, we obtain
\begin{equation}
\begin{aligned}
\label{eq:upper_bound_second_term_case2}
&\sum_{t=1}^T \Pr\left(i(t)=i, \overline{E_i^\theta(t)}, E_i^\mu(t)\right) \\
&\leq \inf_{\epsilon_4 \in (0,\alpha_{i^*}\beta_{i^*}-\alpha_i-\epsilon_3)} \frac{1}{2\epsilon_4^2}+1 \\
&= 1+\frac{1}{2(\alpha_{i^*}\beta_{i^*}-\alpha_i-\epsilon_3)^2} = M_2.
\end{aligned}
\end{equation}

After combining Case 1 and Case 2, \eqref{decompose_second_term} can be proved.

Then we prove \eqref{decompose_third_term}.
The first selection of arm $i$ contributes at most one to the left-hand side of \eqref{decompose_third_term}. For every later selection, $s = n_i(t-1) \geq 1$ and $\ol{\beta}_i(t) = S_{i,s}/s$ under the stack-of-rewards construction, and each $s \geq 1$ occurs at most once immediately before a selection of arm $i$. Hence, on every sample path,
\begin{align}
\label{eq:upper_bound_third_term_pathwise}
&\sum_{t=1}^T \id\left(i(t)=i, \overline{E_i^\mu(t)}\right) \nonumber \\
&\leq 1 + \sum_{t=2}^{T} \id\left(|\ol{\alpha}_i(t)-\alpha_i|>\epsilon_1\right) + \sum_{s=1}^{T-1} \id\left(\left|\tfrac{S_{i,s}}{s}-\beta_i\right|>\epsilon_1\right).
\end{align}
For each deterministic $t \geq 2$, $\ol{\alpha}_i(t)$ is the average of $t-1$ i.i.d.\ samples, and for each deterministic $s \geq 1$, $S_{i,s}/s$ is the average of $s$ i.i.d.\ samples. Taking expectations in \eqref{eq:upper_bound_third_term_pathwise} and applying Hoeffding's inequality to each term gives
\begin{align}
\label{eq:upper_bound_third_term}
&\sum_{t=1}^T \operatorname{Pr}\left(i(t)=i, \overline{E_i^\mu(t)}\right) \nonumber \\
&\leq 1 + 2\sum_{m=1}^{\infty} 2 e^{-2m\epsilon_1^2}
= 1 + \frac{4}{e^{2\epsilon_1^2}-1}
\leq 1+\frac{2}{\epsilon_1^2},
\end{align}
where the last step uses $e^{u}-1 \geq u$ for $u>0$.

Similar to \eqref{eq:alpha_is_small}, we can upper bound \eqref{decompose_fourth_term} as
\begin{align}
\label{eq:upper_bound_fourth_term}
\sum_{t=1}^T \Pr\left( \overline{G_{i^*}^\mu(t)}\right) \leq  1+\frac{1}{\epsilon_2^2},
\end{align}  

\end{document}